\documentclass[11pt]{article}
\usepackage[margin=1in]{geometry}
\usepackage{amsmath,amssymb,amsthm}
\usepackage{graphicx}
\usepackage{wrapfig}
\usepackage{booktabs}
\usepackage{multirow}
\usepackage{enumitem}
\usepackage[round]{natbib}
\usepackage{setspace}
\usepackage{dsfont}
\usepackage{xcolor}
\usepackage{titlesec}
\usepackage[hypertexnames=false,linktocpage=true,pagebackref=true,colorlinks,allcolors=blue,bookmarks=true,bookmarksopen,bookmarksnumbered]{hyperref}

\newtheorem{theorem}{Theorem}[section]

\newtheorem{lemma}[theorem]{Lemma}
\newtheorem{claim}[theorem]{Claim}

\newcommand{\argmax}{\operatorname*{arg\,max}}
\newcommand{\eps}{\epsilon}
\newcommand{\boldtau}{\tau}

\newcommand{\Rev}{\mathcal R}

\newcommand{\PublicEdit}[1]{#1}

\titlespacing*{\paragraph}{0pt}
  {2.6ex plus 0.8ex minus 0.16ex}
  {1em}

\allowdisplaybreaks[4]

\title{AI-Mediation in Retail Markets:\\Revenue Effects and Assortment Optimization}
\author{%
Omar El Housni \quad\qquad Marouane Ibn Brahim\\[0.3em]
\normalsize Cornell Tech, Cornell University\\
\normalsize\texttt{\char`\{oe46,mi262\char`\}@cornell.edu}%
}
\date{\today}

\begin{document}
\hypersetup{pageanchor=false}
\maketitle

\begin{abstract}
\PublicEdit{AI shopping assistants increasingly influence which products consumers consider. We study retail assortment planning under AI mediation, where a retailer chooses a catalog, an AI intermediary forms a customer-facing shortlist, and the customer chooses from that shortlist according to a multinomial logit model. We consider both deterministic mediation, in which the AI follows a fixed segment-specific ranking, and stochastic mediation based on the Plackett--Luce model.

Empirically, we construct a dataset of product rankings from leading AI models, estimate Plackett--Luce ranking models, and evaluate their predictive performance on held-out assortments. Combining these estimates with synthetic customer preferences, we find that accounting for AI mediation can materially change optimal assortments and revenue. Across the tested capacity configurations, ignoring mediation yields mean revenue losses between 2.52\% and 11.37\% relative to the PL optimum. A much simpler deterministic plug-in, based only on the ordering of fitted AI utilities, performs substantially better, with mean losses of at most 0.53\%.

Motivated by this performance, we study assortment optimization under deterministic mediation. We characterize its computational complexity and show that, for any fixed number of customer segments, the problem admits a fully polynomial-time approximation scheme.}
\end{abstract}

\hypersetup{pageanchor=true}
\section{Introduction}\label{sec:introduction}
Assortment optimization is a central problem in retailing and e-commerce, where the goal is to determine which products a retailer should make available to customers in order to maximize expected revenue or other business objectives, subject to operational, capacity, or display constraints. A large literature has developed models and algorithms that translate customer-choice behavior into concrete prescriptions for the retailer's assortment decision. A common feature of this classical paradigm, however, is that the retailer directly controls the set of products from which the customer ultimately chooses.
This direct-control assumption is challenged when customers turn to generative-AI systems such as ChatGPT, Claude, and Gemini to streamline their shopping decisions. Recent consumer surveys point to rapidly growing use of these systems for product discovery, comparison, and recommendation. For example, \citet{capgemini2025consumer} reports that \(58\%\) of consumers surveyed across 12 countries use generative-AI tools as a source of product and service recommendations, compared with \(25\%\) in 2023. %

When an AI intermediary stands between the retailer and the customer, the retailer no longer directly controls the customer-facing assortment. Instead, the retailer may expose a larger machine-facing catalog, from which the AI filters or ranks a smaller set of products for the customer. Because the AI's ranking criterion need not coincide with the retailer's revenue objective, this additional layer may change the retailer's optimal assortment decision. It raises both a descriptive and a prescriptive question: \begin{quote} \emph{What are the revenue consequences of AI-mediated shopping for retailers? How should retailers adapt their assortment decisions when an AI intermediary determines which products customers see?}\end{quote}

In this paper, we develop a framework for assortment planning when product exposure is mediated by an AI system, with the goal of understanding how this additional layer affects retailer revenue and optimal assortment decisions. Empirically, we collect controlled ranking data from several leading AI models across different shopper profiles, use these data to estimate models of AI ranking behavior, and combine the fitted models with controlled customer-preference scenarios to study the economic consequences of AI mediation. \PublicEdit{Theoretically, we analyze assortment optimization under deterministic mediation, characterize its computational complexity across different parameter regimes, and develop exact and approximation algorithms where possible.}

\paragraph{Informal model description.} We introduce a stylized model of AI-mediated assortment optimization that isolates this additional decision layer and allows us to study both its revenue consequences and the resulting assortment problem. A retailer has access to a universe of substitutable products, each generating a product-specific revenue when purchased. The retailer first selects a feasible machine-facing catalog, without observing the arriving customer's segment. An AI intermediary then maps this catalog into a smaller customer-facing set according to a known, possibly stochastic and segment-dependent recommendation rule. The customer observes only this AI-generated shortlist and chooses either one of its products or an outside option representing no purchase. If a product is purchased, the retailer receives the corresponding revenue. Thus, unlike in classical assortment optimization, the retailer controls the set available to the intermediary but not necessarily the set from which the customer ultimately chooses. \PublicEdit{Figure~\ref{fig:ai-mediation-flow} illustrates this decision sequence.} Section~\ref{sec:model} formalizes this decision sequence, the recommendation kernels considered in the paper, and the customer's choice model. In a nutshell, we study two stylized models of how the AI selects which products to show. Under \emph{deterministic mediation}, each AI system and customer profile induces a fixed ranking of products, and the AI displays the highest-ranked products among those offered by the retailer, up to its display capacity. Under \emph{PL-stochastic mediation}, the ranking itself is random and follows the classical Plackett--Luce (PL) model \citep{luce1959individual,plackett1975analysis}, with the AI again displaying the highest-ranked products in the realized ranking. To connect these models to actual AI behavior, we conduct a controlled study in which we query leading AI models as shopping assistants for customers with different profiles, and use the resulting ranking data to calibrate and evaluate the models. \PublicEdit{We use both mediation models to study how AI-mediated product exposure affects retailer revenue, and develop the algorithmic and theoretical analysis of the retailer's assortment-planning problem under deterministic mediation.}

\begin{figure}[!t]
    \centering
    \includegraphics[
        width=\linewidth,
        trim=0 0 0 0,
        clip
    ]{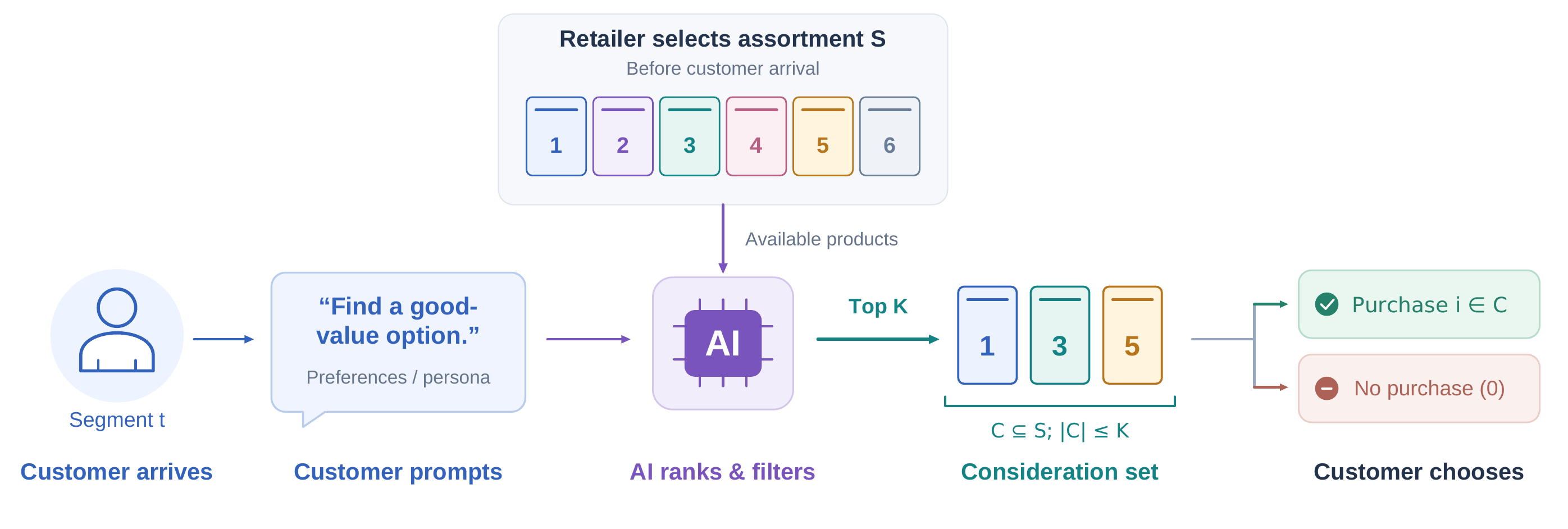}
    \caption{\PublicEdit{AI-mediated shopping sequence.}}
    \label{fig:ai-mediation-flow}
\end{figure}

\subsection{Main Contributions and Technical Ideas}\label{sec:contributions}
\PublicEdit{Our contributions combine an empirical study of AI ranking behavior, an analysis of how AI mediation affects retailer revenue and assortment decisions, and complexity results and approximation algorithms for assortment optimization under deterministic mediation.}
We first construct a controlled dataset of AI recommendations by prompting several AI models, and use these data to estimate and validate Plackett--Luce (PL) models of their ranking behavior. Using the fitted models in semi-synthetic experiments, we find that accounting for mediation leads retailers to choose smaller assortments on average, sometimes avoiding AI filtering altogether.
Ignoring mediation can produce substantial revenue losses. To assess the value of modeling randomness in AI recommendations, we also consider a deterministic approximation: we rank products by their fitted AI utilities and optimize the assortment under this fixed ordering. When evaluated under the stochastic PL model, the resulting assortments recover nearly all of the optimal PL-mediated revenue on average in our experiments. We also show how display capacity and customer–AI preference alignment affect these conclusions.
\PublicEdit{On the theoretical side, we characterize the computational complexity of assortment optimization under deterministic mediation and develop a fully polynomial-time approximation scheme for any fixed number of customer segments.}

\vspace{0.5em}
\noindent \textbf{Empirical characterization and estimation of AI rankings.}
In Section~\ref{sec:ai-ranking-empirics}, we study whether a tractable ranking model can describe the recommendations of AI shopping assistants. Unlike a latent consideration process inferred from purchases, the AI's recommendations can be observed directly by querying AI models with chosen product sets and customer profiles. We use this access to construct ranking data, estimate the mediation layer, and evaluate how well the fitted model predicts AI rankings for assortments not used in estimation.

\begin{enumerate}[label=(\alph*),leftmargin=*,itemsep=0.5em]

\item {\em Ranking data construction and model fitting.}
We generate \(32{,}400\) ranked recommendation lists by prompting models from OpenAI, Anthropic, and Google with real product listings from three categories: Lightning chargers, robot vacuums, and pet leashes. Each model is queried under three customer personas: budget-sensitive, value-seeking, and premium-seeking, which are explicitly specified in the prompts. Each query asks the AI to rank five products from a retailer assortment of ten. We estimate a separate PL model for each category and provider--persona pair, using a ridge-regularized maximum-likelihood approach. We show that the resulting regularized log-likelihood is strictly concave and therefore admits a unique finite maximizer, which can be computed efficiently.

\item {\em Out-of-sample validation.}
We evaluate the fitted PL models on \(5{,}400\) held-out AI rankings whose underlying offer sets were not used for estimation. For each held-out offer set, we use the estimated PL utilities to rank the ten offered products and compare this predicted ranking with the recommendation returned by the AI. The product with the highest estimated utility coincides with the AI's first-ranked product in \(78.8\%\) of observations. The average negative log-likelihood is \(3.312\) per held-out ranking, compared with \(3.360\) on the training sample, indicating essentially no deterioration on previously unseen offer sets. These results suggest that the fitted PL models capture a substantial part of the ranking behavior observed in the AI recommendations.
The fitted utilities also provide a natural deterministic approximation: for each segment, we simply rank products from highest to lowest estimated utility. In Section~\ref{sec:numerics}, we study whether assortments optimized using this simpler fixed ranking continue to perform well when evaluated under the stochastic PL model.
\end{enumerate}

\noindent  \textbf{Experimental effects of mediation on assortments and revenue.}
Our main economic finding is that ignoring mediation can be costly. Surprisingly, accounting only for the ordering of AI preferences, rather than the exact PL fitted utilities, captures most of the attainable revenue in our experiments. Section~\ref{sec:numerics} develops this conclusion by combining the estimated AI-ranking models with synthetic customer-side MNL preferences. We compare three policies: optimization under stochastic PL mediation, optimization under the induced deterministic rankings, and a direct-MNL benchmark that ignores the mediation layer. All policies are evaluated under the same fitted PL mediation model. Throughout, we consider a cardinality-constrained setting in which the retailer may offer at most \(B\) products, and the AI displays at most \(K\) products to the customer, where \(B\) and \(K\) are exogenous positive integers.

\begin{enumerate}[label=(\alph*),leftmargin=*,itemsep=0.5em]

\item {\em AI mediation leads to lower-cardinality assortments.}
When the retailer accounts for AI mediation, it generally chooses to offer fewer products than when it ignores the AI. Across the tested configurations, the direct-MNL assortment contains \(1.48\) to \(4.15\) more products on average than the PL-optimal assortment.
There are two ways in which the retailer responds to the AI's display limit \(K\). In some instances, the retailer offers more than \(K\) products and allows the AI to choose which \(K\) products the customer sees. In other instances, the retailer instead reduces its assortment to \(K\) products or fewer, so that every product it offers is shown to the customer.

\item {\em Ignoring mediation creates substantial losses, especially when display capacity is tight.}
The direct-MNL policy incurs mean revenue losses ranging from \(2.52\%\) to \(11.37\%\) across the considered parameter configurations, relative to the PL-optimal assortment.
The losses are not confined to averages: the configuration-specific \(95\)th percentiles range from \(10.36\%\) to \(30.66\%\), and the largest individual loss is \(57.08\%\).
Moreover, losses tend to be larger when the AI displays fewer products to the customer.

\item {\em A deterministic plug-in recovers nearly all optimal revenue on average.}
We next ask how much is lost if the retailer accounts for the AI's ranking of products but ignores the randomness in those rankings. The deterministic model keeps the same segment-specific product ordering and customer-choice model, but assumes that the AI always follows a fixed ranking. For each instance, we optimize the assortment under this deterministic model and then evaluate the resulting assortment under the fitted stochastic PL model. Across the tested configurations, the mean revenue loss relative to the PL optimum ranges from only \(0.08\%\) to \(0.53\%\). These losses are far smaller than those obtained when the retailer ignores mediation altogether, suggesting that most of the value of accounting for mediation comes from capturing the AI's average ranking of products rather than the full stochastic variation in its recommendations.

\item {\em Preference alignment reduces the cost of ignoring mediation.}
We vary how strongly synthetic customer preferences align with a persona-level summary of the fitted AI preferences, using an alignment parameter \(\rho\in[0,1]\), where a larger \(\rho\) represents higher alignment. Deferring the formal definition of alignment to Section~\ref{sec:numerics}, this parameter controls the portion of the customer's utility which is aligned with the AI and a complementary random portion capturing each customer's idiosyncrasies. Mean direct-MNL revenue losses range from \(5.75\%\) to \(24.06\%\) when \(\rho=0\), compared with \(0.59\%\)--\(5.74\%\) when \(\rho=1\), although the intervening relationship is not always monotone.

\item {\em Greater alignment can make retailers more willing to allow filtering.}
The lower cost of ignoring mediation at higher alignment does not mean that filtering disappears. At \(\rho=0\), active filtering occurs in at most \(8.9\%\) of instances across capacity configurations; at \(\rho=0.95\), its share ranges from \(44.4\%\) to \(66.7\%\). Thus, greater alignment can make it optimal to leave more products for the AI to filter.

\end{enumerate}

\noindent \textbf{Computational complexity and approximation under deterministic mediation.}
The economic performance of the deterministic plug-in motivates studying the corresponding assortment optimization problem. In particular, we consider the cardinality-constrained setting in which the retailer may offer at most \(B\ge 1\) products. In Section~\ref{sec:deterministic}, we characterize its computational complexity in terms of the number of segments \(m\) and display capacity \(K\). We then construct an approximation scheme for any arbitrary precision. Here, \(n\) denotes the number of products.

\begin{enumerate}[label=(\alph*),leftmargin=*,itemsep=0.5em]

\item {\em Complexity landscape under deterministic mediation.}
When the number of segments \(m\) is unrestricted, deterministic mediated assortment optimization is NP-hard even for \(K=1\) and \(B=n\). Moreover, for every fixed \(\epsilon\in(0,1)\), the problem admits no polynomial-time \(O(n^{1-\epsilon})\)-approximation unless \(\mathrm{P}=\mathrm{NP}\). Our reduction is from assortment planning under ranking preferences \citep{aouad2018approximability}. At the same time, the problem becomes tractable in several important parameter regimes. In particular, we show that there exists an optimal assortment containing at most \(mK\) products, since we can safely remove any product that is never displayed to any segment. Consequently, when both \(m\) and \(K\) are fixed, enumerating these bounded-size assortments gives a polynomial-time exact algorithm. With a single segment, the problem is also polynomial-time solvable for arbitrary \(K\), because it reduces to a cardinality-constrained MNL assortment problem. In contrast, fixing \(m\ge 2\) alone does not recover exact polynomial-time solvability: taking \(K=B=n\) recovers assortment optimization under a mixture of MNL models, which is weakly NP-hard already with two segments \citep{rusmevichientong2014assortment}. The resulting complexity landscape is summarized in Table~\ref{tab:det-computational-landscape}.%

\item {\em An FPTAS for a fixed number of segments.}
Our main algorithmic contribution for the deterministic mediated assortment problem  is an approximation scheme that, for any \(\epsilon\in(0,1)\), returns a feasible assortment whose revenue is at least \(1-\epsilon\) times optimal. For every fixed \(m\), the running time is polynomial in the binary encoding length of the instance and \(1/\epsilon\), yielding a fully polynomial-time approximation scheme (FPTAS).
The key idea is to enumerate the worst-ranked displayed product for each segment, which we refer to as threshold products. Fixing these thresholds determines which selected products contribute to each segment's displayed set, provided that exactly \(K\) selected products lie at or above its threshold. Conditional on these thresholds, the objective is a sum of \(m\) ratios of affine functions. We approximate this objective using a framework for sums of ratios, whose exact-solvability condition is verified through multidimensional knapsack.
\end{enumerate}

\subsection{Related Literature}\label{sec:related-literature}
\paragraph{Assortment optimization.}
Assortment optimization studies how a retailer should select a subset of products from a larger universe when customers substitute among the products made available to them; see \citet{Kök2009} for a broad review. In their seminal paper, \citet{talluri2004revenue} formulate choice-based revenue management under a general discrete-choice model and identify the revenue-ordered structure of optimal assortments under the multinomial logit model. A large subsequent literature develops structural and algorithmic results under richer models of substitution, including the nested logit model \citep{gallego2014constrained,feldman2015capacity} and the Markov-chain choice model \citep{desir2020constrained,desir2024robust}. Other work studies operational or distributional restrictions on the retailer's decision, including cardinality and capacity constraints \citep{desir2022capacitated}, minimum-visibility requirements \citep{barre2026assortment}, fairness or exposure constraints \citep{lu2023step, chen2022fair} and covering constraints \citep{housni2024assortment}. Although these papers differ substantially in their choice models, objectives, and feasible assortment families, the core decision sequence is generally the same: the retailer chooses the customer-facing assortment, and the customer chooses directly from that set. Our model inserts an additional decision layer. The retailer chooses a machine-facing catalog \(S\), while the customer-facing set \(C\subseteq S\) is generated by an AI intermediary. The retailer therefore controls the input to the mediation rule rather than the set from which the customer ultimately chooses.

\paragraph{Consider-then-choose models.} The closest line of research to our paper is the literature on consideration sets in customer choice. Marketing models have long recognized that consumers may first screen a large set of alternatives and then choose only among those that survive the screening stage. Early work explains consideration-set formation through evaluation costs and the incremental value of examining additional alternatives \citep{hauser1990evaluation,roberts1991development}. In a stochastic-choice formulation, \citet{manzini2014stochastic} assume that each alternative is independently considered with an alternative-specific attention probability before the consumer selects the most preferred considered alternative.

Within the operations literature, \citet{aouad2021assortment} develop a general dynamic-programming framework for assortment optimization under structured consider-then-choose models, in which products are first screened and then ranked. The closest mathematical connection to our general formulation is the click-based MNL model of \citet{aouad2025click}. In that model, a customer considers only the products that she clicks and subsequently chooses among the clicked products according to an MNL model. The seller selects the set of products made available for purchase, and the authors establish a polynomial-time approximation scheme for the resulting assortment problem.
\citet{gallego2024random} operationalize this random-consideration-set model for demand estimation, assortment optimization, and pricing.

In our paper, we consider a different class of first-stage mediation mechanisms. As formalized in Section~\ref{sec:model}, we study two ranking-based models of AI mediation. In the first, each customer segment is associated with a deterministic ranking of the products, and the AI displays the top \(K\) products from the retailer's machine-facing catalog according to this ranking. In the second, the AI's ranking is stochastic and follows the classical Plackett--Luce model \citep{plackett1975analysis,luce1959individual}: each product is assigned a positive weight, a random ranking is generated according to these weights, and the AI displays the top \(K\) products in that ranking. \PublicEdit{To the best of our knowledge, these ranking-based mediation mechanisms, and the resulting assortment-optimization problems, are not captured by the consideration-set models studied in the existing assortment-optimization literature.} 
A key difference in our setting is that the consideration set is generated by an AI system rather than formed internally by the consumer. In traditional consider-then-choose models, the consumer's consideration set is typically latent and must be inferred indirectly from observed choices or auxiliary data. Here, while the consideration sets encountered in actual shopping interactions are still not directly observed by the retailer, the underlying AI systems can be queried in controlled settings with chosen product sets and customer profiles. This makes it possible to generate targeted ranking data, fit tractable and interpretable models of AI behavior, and then use those models for revenue analysis and assortment optimization.

\paragraph{AI-shopping agents.}
A growing literature studies how AI systems alter product discovery, recommendation, and purchasing. \citet{allouah2026agent} study the behavior of AI shopping agents when they are asked to choose products from a given assortment. Through controlled experiments across several AI models, they document systematic responses to product and marketplace attributes, including prices, ratings, reviews, listing positions, and platform endorsements, as well as substantial heterogeneity across AI models in the choices they induce. Their setting treats the AI agent as the final decision maker, and the main objective is descriptive: to understand and model how the AI chooses among available products. Our paper instead studies a different decision process in which the AI first filters the retailer's assortment and the customer then makes the final purchase decision from the resulting recommendation set. This additional layer creates a prescriptive problem for the retailer: how should the assortment be chosen when the retailer controls the products made available to the AI, but the AI controls which of those products the customer ultimately sees? Our focus is therefore on the retailer's assortment-planning problem and on the revenue implications of this additional mediation layer. At the retailer-controlled end of the spectrum, \citet{liu2026governed} develop an inventory-aware agentic policy for basket recommendation that jointly accounts for personalization, product availability, and operational constraints. In that setting, the recommendation agent is itself a decision instrument of the retailer, whereas we treat the AI intermediary as a separate behavioral layer whose ranking objective need not coincide with the retailer's revenue objective.

Several recent papers instead study how a shopping agent should interact with or act on behalf of a consumer. \citet{cao2026solicit}
introduce a solicit-then-suggest model in which an agent asks a sequence of preference-elicitation questions and then recommends \(k\) products. They characterize the trade-off between solicitation depth and recommendation-set breadth, as well as the structure of the agent's optimal elicitation and recommendation policy. \citet{dong2025humanai}
study costly and noisy communication between a customer and a recommendation agent, jointly optimizing the information communicated by the customer and the diversity of the agent's recommendation set. \citet{fu2026strategic} study a buying agent that monitors prices over time and determines when to purchase on the consumer's behalf. These papers optimize the agent's information acquisition, recommendation, or purchasing policy from the consumer's perspective. We instead study the higher-level prescriptive problem faced by the retailer: taking the intermediary's behavior as given or estimated, how should the retailer choose its assortment when the intermediary, rather than the retailer, determines which products are ultimately exposed to customers?

\paragraph{Random-utility, Plackett-Luce, and multinomial logit models.} Our ranking and customer choice specifications build on the random-utility literature. In a random-utility model, each alternative receives a deterministic utility component and a random shock, and the selected alternative is the one with the highest realized utility \citep{mcfadden1974conditional}. When the shocks are independent standard Gumbel-distributed random variables, the probability that an alternative attains the highest utility has the multinomial-logit form associated with Luce's choice axiom and conditional logit \citep{luce1959individual,mcfadden1974conditional}. The MNL model is widely used in assortment optimization because its choice probabilities admit a parsimonious closed form, although its independence-of-irrelevant-alternatives structure can be restrictive.
The Plackett--Luce model extends the same probability structure from a single choice to an ordered ranking \citep{luce1959individual,plackett1975analysis}. It can be generated by sorting alternatives according to independent Gumbel-perturbed utilities. The relationship between the two models is therefore probabilistic rather than terminological: MNL describes the top-ranked alternative, whereas Plackett--Luce describes an ordered sequence of alternatives.

We use this family of models at two distinct stages. On the AI side, segment-\(t\) Plackett--Luce weights \(w_{it}\) govern a random ranking of the products in the seller's catalog, and the AI displays the products occupying the first \(K\) positions. On the customer side, separate MNL weights \(v_{it}\), together with an outside option, govern the customer's purchase decision from the displayed shortlist. These are different behavioral primitives: the AI-side weights determine exposure, whereas the customer-side weights determine substitution and purchase conditional on exposure. We impose no requirement that the two sets of utilities coincide, and study the effect of alignment in Section~\ref{sec:numerics}.

\PublicEdit{The remainder of the paper is organized as follows. Section~\ref*{sec:model} presents the model; Section~\ref*{sec:ai-ranking-empirics} describes the AI-ranking data, estimation, and validation; Section~\ref*{sec:numerics} studies the economic effects of mediation through numerical experiments; Section~\ref*{sec:deterministic} develops the deterministic complexity and approximation results.}

\section{Model Formulation}
\label{sec:model}

This section formulates the AI-mediation model used throughout our work, by modelling the interaction between the retailer and the customer through an AI intermediary. The central distinction from classical assortment optimization is that the retailer does not directly choose the set from which customers make purchase decisions.
Instead, the retailer first selects a machine-facing catalog, then an AI intermediary maps this catalog into a smaller customer-facing set, and finally, the customer chooses from the mediated set. We define this process through a general recommendation kernel in Section~\ref{subsec:general-model}, and  specialize our model to ranking-based kernels in Section~\ref{subsec:ranking-kernels}.

\subsection{General Model of AI-Mediation}
\label{subsec:general-model}
\paragraph{Products and customer segments.}
Let \(N=\{1,\ldots,n\}\) denote a universe of substitutable products, with each product \(i\in N\) generating revenue \(r_i>0\) when purchased. Customers belong to one of \(m\) segments, indexed by \(t\in\mathcal T=\{1,\ldots,m\}\). Segment \(t\) occurs with probability \(\lambda_t>0\), with \(\sum_{t\in\mathcal T}\lambda_t=1\).
We use segments to capture heterogeneity in both customer preferences and the behavior of the AI assistant through which the customer shops. Concretely, a segment can be interpreted as a particular subpopulation of customers who use a particular AI system. Both components (subpopulation and AI model) matter: two customers with the same underlying preferences may belong to different segments if they use different AI systems and therefore receive different recommendations, while two customers using the same AI system may belong to different segments if their preferences differ. In our numerical experiments, we make this interpretation explicit by defining segments through combinations of customer profiles, such as budget-sensitive or premium-seeking customers, and AI models, such as ChatGPT or Gemini Flash. Thus, for example, budget-sensitive customers using ChatGPT constitute a different segment from budget-sensitive customers using Gemini Flash, and also from premium-seeking customers using ChatGPT. Throughout the paper, however, we keep the notion of a segment fully general and impose no additional structure on \(\mathcal T\) unless stated otherwise.

\paragraph{Seller assortment and AI mediation.}
The retailer chooses an assortment \(S\in\mathcal F\), where \(\mathcal F = \{S\subseteq N\,:\,|S|\leq B\}\) denotes the family of feasible assortments consisting of at most $B$ products, where $B$ is an exogenous positive integer. The retailer selects a single assortment to display to an arriving request, without observing the corresponding segment.
For each segment \(t\in\mathcal T\), we represent the AI's mediation rule by a probability kernel \(Q_t(C\mid S)\) over subsets \(C\subseteq S\).
Specifically, \(Q_t(C\mid S)\) denotes the probability that segment $t$'s AI displays the customer-facing set \(C\) when given assortment \(S\). Importantly, our model allows for the mediation rule to depend on the customer segment. The dependence on \(t\) allows the AI's recommendations to vary across segments, reflecting differences both in the AI system being used and in the customer's profile or preferences.
For every \(S\subseteq N\), this kernel satisfies \(Q_t(C\mid S)\ge 0\) for all \(C\subseteq S\) and \(\sum_{C\subseteq S}Q_t(C\mid S)=1\).

This kernel formulation is intended to serve as a general umbrella framework for AI-mediated assortment optimization. In its unrestricted form, \(Q_t(C\mid S)\) is too general to admit a meaningful structural analysis. We introduce it to formalize the general retailer--AI--customer interaction and to define the assortment optimization problems to follow. In Section~\ref{subsec:ranking-kernels}, we introduce two ranking-based mediation models that constitute the focus of our analysis. Throughout, the retailer takes the mediation rule as given and can affect the customer-facing set only through its choice of the assortments.

\paragraph{Customer preferences.}
We assume that each customer segment follows a segment-specific multinomial logit (MNL) choice model. For every segment \(t\in\mathcal T\), each product \(i\in N\) is associated with a preference weight \(v_{it}\ge 0\), which captures its attractiveness to customers in that segment. We also include an outside option, indexed by \(0\), with a scaled preference weight of $1$, without loss of generality.
Subsequently, when a segment-\(t\) customer is presented with a set of products \(C\subseteq N\), the probability of choosing product \(i\in C\) is
$$
P_t(i\mid C)
=
\frac{v_{it}}
{1+\sum_{j\in C}v_{jt}}.
$$
The probability of choosing the outside option is $
P_t(0\mid C)=1/(1+\sum_{j\in C}v_{jt}).
$

\paragraph{AI-mediated assortment problem.}
Based on the preceding discussion, given a customer-facing set \(C\subseteq N\), the expected revenue generated by a segment-\(t\) customer is

$$
R_t(C)
=
\sum_{i\in C} r_i P_t(i\mid C)
=
\frac{\sum_{i\in C} r_i v_{it}}
{1+\sum_{i\in C} v_{it}}.
$$
Since the AI associated with segment \(t \in {\cal T}\) selects the displayed set \(C\) according to the kernel \(Q_t\), the expected revenue from a segment-\(t\) customer, conditional on retailer assortment \(S\), is$$
    \sum_{C\subseteq S} Q_t(C\mid S)R_t(C).
$$
Thus, the overall segment-weighted expected revenue given a retailer assortment \(S\in\mathcal F\) is
\begin{equation*}
\Rev_Q(S)
=
\sum_{t\in\mathcal T}\lambda_t \cdot
\sum_{C\subseteq S}
Q_t(C\mid S)R_t(C).
\end{equation*}
We introduce the {\em AI-Mediated assortment optimization problem}, denoted by \ref{prob:ai-ao}:
\begin{equation*}
\max_{S\in\mathcal F} \Rev_Q(S).
\tag{{\sf AIM}}
\label{prob:ai-ao}
\end{equation*}

\subsection{Ranking-Based Mediation}
\label{subsec:ranking-kernels}
As previously mentioned, the unrestricted mediation kernel is intended to serve as a general framework and is too broad to study directly. In this paper, we focus on two specific mediation rules, both based on cardinality-constrained rankings. Let
$K \in \{1,\ldots,n\}$ denote the AI's display capacity.
When the AI is given an assortment \(S \subseteq N\), it ranks these products according to a known segment-specific ranking, which may be deterministic or stochastic, and displays the top \(K\) products when \(|S|\ge K\); if \(|S|<K\), the entire assortment is displayed.

Formally, let \(\mathfrak S_N\) denote the set of permutations of \(N\). For any permutation \(\sigma\in\mathfrak S_N\), let
\(
    \operatorname{TopK}_{\sigma}(S) 
\)
denote the \(\min\{K,|S|\}\) highest-ranked products in \(S\) according to
\(\sigma\). Suppose that segment \(t\) is associated with a distribution
\(\mu_t\) over \(\mathfrak S_N\). Then, the induced ranking-based mediation kernel is
\begin{equation}
    Q_t(C\mid S)
    =
    \sum_{\sigma\in\mathfrak S_N}
    \mu_t(\sigma)
    \mathds{1}
    \left\{
        C=\operatorname{TopK}_{\sigma}(S)
    \right\}.
    \label{eq:ranking-kernel}
\end{equation}
In other words, the AI displays the subset $C\subseteq S$ if and only if the products in $C$ represent the top-$K$ according to the drawn ranking \(\sigma\), or if $C$ is the entire assortment when \(|S|<K\).

In this paper, we consider two ranking rules whose specifics are provided in Sections~\ref{subsubsec:model-deterministic} and~\ref{subsubsec:model-PL}. The first is a segment-specific deterministic ranking. The second is based on the classical Plackett--Luce model, in which products are assigned segment-specific positive weights that induce a random ranking, and the AI displays the top \(K\) products under that ranking.

\subsection{Deterministic Rankings}\label{subsubsec:model-deterministic}
Deterministic mediation is obtained when \(\mu_t\) places unit mass on a single ranking \(\sigma_t\). In that case, for any fixed assortment \(S\subseteq N\), the AI displays a single subset \(C_t^{\mathrm D}(S)\) almost surely to each segment-\(t\) customer. Specifically, \(C_t^{\mathrm D}(S)\) consists of the top-\(K\) products according to \(\sigma_t\) when \(|S|\ge K\), and the entire assortment when \(|S|<K\). Hence, \(
    Q_t^{\mathrm D}(C\mid S)
    =
    \mathds{1}
    \{
        C=C_t^{\mathrm D}(S)
    \}.
\)
The expected revenue reduces to
\begin{equation*}
    \Rev^{\mathrm D}(S)
    =
    \sum_{t\in\mathcal T}\lambda_t
    R_t\!\left(C_t^{\mathrm D}(S)\right).
\end{equation*}
Finally, we define {\em AI-Mediated assortment optimization under deterministic mediation} as
\begin{equation}
\max_{S\in\mathcal F} \Rev^{\mathrm D}(S).
\tag{$\text{{\sf AIM}}_\text{\sf D}$}
\label{prob:d-ai-ao}
\end{equation}

Deterministic mediation provides a natural benchmark in which, for each segment, the AI follows a fixed ranking of products and recommends the highest-ranked products available in the retailer's assortment. This formulation is also highly expressive: any stochastic ranking rule of the form \eqref{eq:ranking-kernel} can be represented as a distribution over deterministic rankings, and random-utility models admit such ranking-based representations. Hence, by associating a separate segment with each ranking realization, the deterministic-mediation model can represent any such stochastic ranking model exactly. The only caveat is computational: an exact representation may require exponentially many rankings, and therefore exponentially many segments.

\subsection{Plackett--Luce Rankings}\label{subsubsec:model-PL}
Concurrently, we also consider rankings generated by the classic Plackett--Luce model \citep{luce1959individual,plackett1975analysis}, which is one of the most widespread models considered in theory and practice for ranked-choice data. It is widely used because it is simple, flexible enough to capture differences in product attractiveness, and easy to estimate from observed rankings. It is also closely related to the multinomial logit model: each stage of a Plackett--Luce ranking follows the same logit choice rule. This choice is also in line with recent work on AI shopping behavior; for example, \citet{allouah2026agent} use a conditional logit model to study how AI agents respond to product and marketplace attributes.

\paragraph{The Plackett-Luce model.}
In the PL model, each product receives a utility composed of a deterministic nominal component and a stochastic shock capturing the inherent uncertainty in preferences. Formally, for each product \(i\) and segment \(t\), let \(u_{it}\in\mathbb R\) denote a nominal AI utility. The score of product $i$ for segment $t$'s AI is given by
\(
    U_{it}
    =
    u_{it}+\epsilon_{it},
\)
where the shocks \(\epsilon_{it}\) are independent standard Gumbel random variables. These scores then induce a permutation describing the decreasing order of scores.

\paragraph{Induced mediation kernel.}
We now derive the mediation kernel induced by the Plackett--Luce ranking model. Fix an assortment \(S\subseteq N\), and let \(k(S)=\min\{K,|S|\}\) denote the number of products displayed by the AI. For each segment \(t\), write \(w_{it}=\exp(u_{it})\) for the positive Plackett--Luce weight of product \(i\). These quantities represent AI-side weights and are distinct from the customer-side MNL weights \(v_{it}\), which govern the customer's final choice from the displayed set.

Consider a candidate displayed set \(C\subseteq S\) with \(|C|=k(S)\). The event that the AI displays exactly \(C\) occurs whenever the first \(k(S)\) positions of the realized ranking coincide with the products in \(C\), in any order. Hence,
\begin{equation}\label{eq:pl-kernel}
    Q_t^{\mathrm{PL}}(C\mid S)
=
\sum_{\tau\in\mathfrak S_C}
p_t^{\mathrm{PL}}(\tau\mid S),
\end{equation} 
where \(\mathfrak S_C\) denotes the set of all permutations of \(C\), and \(p_t^{\mathrm{PL}}(\tau\mid S)\) is the probability that the ordered list \(\tau=(\tau_1,\ldots,\tau_{k(S)})\) occupies the first \(k(S)\) positions of the Plackett--Luce ranking.

For any assortment $S\subseteq N$ and any market segment $t \in {\cal T}$, let $W_{t}(S) = \sum_{i\in S} w_{it}$, the total PL weight of assortment $S$ for segment $t$.

In the next lemma, we express the probability of a given ordered list of
products appearing in the first \(k(S)\) positions of the
Plackett--Luce ranking. The result follows directly from the
Gumbel-Top-\(k\) representation of sampling without replacement; see,
e.g., \citet[Theorem~1]{kool2019stochastic}.

\begin{lemma}[Plackett--Luce ranking probabilities]
\label{lem:pl-ranking-probabilities}
For every assortment \(S\) and every ordered list
\(\tau=(\tau_1,\ldots,\tau_{k(S)})\) of distinct products from \(S\),
\begin{equation*}
p_t^{\mathrm{PL}}(\tau\mid S)
=
\prod_{\ell=1}^{k(S)}
\frac{w_{\tau_\ell t}}
{W_t\left(S\setminus\{\tau_1,\ldots,\tau_{\ell-1}\}\right)}.
\end{equation*}
\end{lemma}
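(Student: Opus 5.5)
We will prove the formula by induction on the position \(\ell\), working directly with the Gumbel representation of the ranking: the AI's ranking of \(S\) sorts the scores \(U_{it}=u_{it}+\epsilon_{it}\), \(i\in S\), in decreasing order. Products outside \(S\) play no role, since \(\operatorname{TopK}_\sigma(S)\) depends only on the relative order of products in \(S\). Ties occur with probability zero because the Gumbel distribution is continuous, so we may ignore them.

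\textbf{Base step.} We first show that \(\Pr(\arg\max_{i\in S}U_{it}=j)=w_{jt}/W_t(S)\). This is the standard logit computation. Conditioning on \(\epsilon_{jt}=x\), the probability that all other scores fall below \(U_{jt}\) is \(\prod_{i\neq j}\exp(-e^{-(u_{jt}+x-u_{it})})\). Integrating against the Gumbel density \(e^{-x}\exp(-e^{-x})\) gives
\begin{equation*}
\int e^{-x}\exp\Bigl(-e^{-x}\,\tfrac{W_t(S)}{w_{jt}}\Bigr)\,dx=\frac{w_{jt}}{W_t(S)}.
\end{equation*}
We also need one additional fact: the maximum \(\max_{i\in S}U_{it}\) is itself Gumbel with location \(\log W_t(S)\), and it is independent of the identity of the argmax. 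This can be checked with the same integral, by computing the joint law of the argmax and the maximum value and observing that it factorizes.

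\textbf{Inductive step.} The key step, and the main obstacle, is to show that conditional on \(\tau_1\) being ranked first, the residual order on \(S\setminus\{\tau_1\}\) is again a PL ranking with the same weights. We plan to prove this by computing the probability of the full event \(\{U_{\tau_1 t}>U_{\tau_2 t}>\cdots>U_{\tau_k t}>\max_{i\in S\setminus\{\tau_1,\ldots,\tau_k\}}U_{it}\}\) directly. Replace the remaining products by a single aggregate Gumbel variable with location \(\log W_t(S\setminus\{\tau_1,\ldots,\tau_k\})\). Then iterate the conditioning from the top: given the value \(U_{\tau_1 t}=y\), the event that \(\tau_1\) is first and the rest are ordered correctly splits into a factor \(w_{\tau_1 t}/W_t(S)\) times the same type of event for \(S\setminus\{\tau_1\}\). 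This split uses the max-independence property above, applied to the variables \(U_{it}\), \(i\in S\setminus\{\tau_1\}\), conditioned on lying below \(y\).

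If that conditioning argument becomes delicate, the fallback is a direct inductive integration. The probability of the ordered event equals a \(k\)-fold iterated integral, and we evaluate the innermost integral first. Each evaluation produces the factor \(w_{\tau_\ell t}/W_t(S\setminus\{\tau_1,\ldots,\tau_{\ell-1}\})\) because of the identity \(\int e^{-x}\exp(-c e^{-x})\,dx=1/c\). The outermost variable is integrated last.

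Multiplying the \(k(S)\) factors then yields the stated product formula. Alternatively, since the paper notes that the result is exactly the Gumbel-Top-\(k\) theorem of \citet[Theorem~1]{kool2019stochastic}, we may simply cite it after verifying that the setting matches: independent standard Gumbel shocks added to \(\log w_{it}\), restricted to the set \(S\).
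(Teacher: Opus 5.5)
Your proposal is correct and rests on the same Gumbel-Top-$k$ representation the paper relies on. The paper gives no argument beyond citing \citet[Theorem~1]{kool2019stochastic}, which is your last option. Your fallback iterated integral actually proves that theorem, so it makes the lemma self-contained.

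If you write out the derivation, tighten the following points.
\begin{itemize}
\item In the top-down conditioning step, splitting off $w_{\tau_1 t}/W_t(S)$ requires a specific independence. The whole event ``$\tau_2,\ldots,\tau_k$ lead $S\setminus\{\tau_1\}$ in this order'' must be independent of $\max_{i\in S\setminus\{\tau_1\}}U_{it}$. For $k\ge 3$ this event concerns more than the argmax, so the argmax--max independence you state does not cover it directly.
\item You can repair this in one of two ways. Write $E_\tau(S)$ for the event that $\tau$ heads the ranking of $S$. Either carry the joint identity $\Pr\bigl(E_\tau(S),\,\max_{i\in S}U_{it}\le y\bigr)=\Pr\bigl(E_\tau(S)\bigr)\exp\bigl(-W_t(S)e^{-y}\bigr)$ in the induction hypothesis, since the same one-line integral proves it. Or run the recursion bottom-up, which is exactly your fallback.
\item The factor $w_{\tau_1 t}/W_t(S)$ appears only after integrating over $y$, not for each fixed $y$.
\item In the fallback, the inner integrals have an upper limit, so the identity you need is $\int_{-\infty}^{a} w\,e^{-x}\exp(-c\,e^{-x})\,dx=(w/c)\exp(-c\,e^{-a})$, with $c=W_t(S\setminus\{\tau_1,\ldots,\tau_{\ell-1}\})$. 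The leftover exponential is the CDF of $\max_{i\in S\setminus\{\tau_1,\ldots,\tau_{\ell-1}\}}U_{it}$ and merges with the next density. Only the outermost integral runs over the whole line.
\item When $k(S)=|S|$, the residual set is empty and there is no aggregate Gumbel variable. Take its CDF to be identically $1$; the last factor then equals $1$, consistent with the formula.
\end{itemize}
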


Combining the lemma with Equation~\eqref{eq:pl-kernel} gives an explicit expression for the induced mediation kernel \(Q_t^{\mathrm{PL}}(C\mid S)\).
We can now define the resulting retailer's revenue function under Plackett--Luce mediation as
\[
    \Rev^{\mathrm{PL}}(S)
    =
    \sum_{t\in\mathcal T}\lambda_t
    \sum_{\substack{C\subseteq S\colon\\ |C|=k(S)}}
    Q_t^{\mathrm{PL}}(C\mid S)
    R_t(C).
\]
Finally, we define {\em AI-Mediated assortment optimization under Plackett--Luce mediation} as
\begin{equation}
\max_{S\in\mathcal F} \Rev^{\mathrm{PL}}(S).
\tag{$\text{\sf AIM}_{\text{\sf PL}}$}
\label{prob:pl-ai-ao}
\end{equation}

\section{AI-Ranking Data, Estimation, and Validation}
\label{sec:ai-ranking-empirics}

In this section we present our data-generation process, including the product universes examined, prompting conditions, and train--test design. Specifically, we consider a segmented market as described in Section~\ref{subsec:general-model} and generate AI rankings for each segment by prompting the corresponding AI model with randomly sampled product offer sets.
We then estimate a separate Plackett--Luce (PL) model for each market segment. To this end, we express the PL likelihood of the observed AI recommendations and establish concavity of the corresponding log-likelihood. We subsequently use the test samples to evaluate out-of-sample predictive performance.

This estimation step focuses on the stochastic PL model, which allows AI recommendations to vary across queries. In particular, even within a single market segment, the same two products need not be consistently ranked in the same relative order. The deterministic model, by contrast, assigns each segment a single ranking, so fitting it directly to data with such variation would require introducing an additional criterion for reconciling conflicting observed rankings. Instead, we obtain a natural deterministic counterpart from the PL estimates, where products are ranked according to their fitted utilities. In Section~\ref{sec:numerics}, we evaluate this deterministic plug-in and demonstrate that it performs well relative to the stochastic PL benchmark in our experiments.

\subsection{Data Generation and Prompting Protocol}
\label{sec:ranking-data}
\paragraph{Product data and universes.}
In what follows, we utilize the public Amazon Reviews 2023 product metadata dataset \citep{hou2024bridging} to construct three product universes, corresponding to different product categories.
Our three product universes are stored in category-specific CSV files, with one row per product.
Each row records a product identifier, its Amazon product page title, the listed price and brand/store, and a set of category-specific product attributes.
These files describe \(25\) Lightning chargers, \(22\) robot vacuums, and \(25\) pet leashes, for \(72\) products in total, whose characteristics are summarized in Table~\ref{tab:product-universe-summary}. Representative examples are provided in Appendix~\ref{app:pl-calibration}.

\begin{table}[!htbp]
\centering
\setstretch{1}
\caption{Product universes and information available to the AI ranking
models.}
\label{tab:product-universe-summary}

\footnotesize
\setlength{\tabcolsep}{4pt}
\renewcommand{\arraystretch}{1.0}

\begin{tabular}{@{}p{3.4cm} p{2cm} p{5.9cm} p{4.2cm}@{}}
\toprule
{\raggedright Product category\par}
& {\raggedright Number of products\par}
& {\raggedright Product attributes\par}
& {\raggedright Listed price (\$):\\median [min, max]\par} \\
\midrule

{\raggedright Lightning chargers\par}
& 25
& {\raggedright Wattage; connector type; cable length; port count\par}
& {\raggedright 16.79 [6.90, 32.65]\par} \\

{\raggedright Robot vacuums\par}
& 22
& {\raggedright Suction power; run time; Wi-Fi capability; mopping; self-emptying\par}
& {\raggedright 200.86 [49.99, 1,149.99]\par} \\

{\raggedright Pet leashes\par}
& 25
& {\raggedright Product type; species; length; closure type; material\par}
& {\raggedright 18.00 [8.85, 29.95]\par} \\

\bottomrule
\end{tabular}
\end{table}

\paragraph{AI models and prompting personas.}
We consider three widely popular AI models, which were prompted to produce rankings among randomly generated product offer sets:
\begin{itemize}
    \item OpenAI's ChatGPT 5.4 (\texttt{gpt-5.4-mini-2026-03-17}).
    \item Anthropic's Claude Haiku 4.5 (\texttt{claude-haiku-4-5-20251001}).
    \item Google's Gemini 3.6 Flash (\texttt{gemini-3.6-flash}).
\end{itemize}

We also consider a stylized customer population consisting of three sub-populations, which are represented via three prompting personas: 
\begin{itemize}
    \item \emph{Budget-sensitive}: Prioritizes the cheapest available option and permits compromises in features, quality, or brand reputation.
    \item \emph{Value-seeking}: Prioritizes the best price-to-quality tradeoff, accepting a higher price for a meaningful quality improvement but not for a marginal improvement or brand name alone.
    \item \emph{Premium-seeking}: Prioritizes quality, premium features, and reputable brands, with price treated as secondary.
\end{itemize}
Each model is queried under each of these personas, yielding nine
model--persona pairs, which constitute the market segments used in our empirical analysis.

\paragraph{Prompting and AI-data generation.}
We next describe the protocol employed to generate the AI-ranking data. For each persona, we specify a persona-specific instruction describing the shopping preferences that the AI model is asked to represent. This instruction is inserted into a common prompt template that is subsequently instantiated with the product category and the products contained in a given offer set. For example, the persona-specific instruction for value-seeking customers is:
\begin{quote}
``You are assisting a value-seeking customer who cares most about the best
price-to-quality tradeoff. They will pay more for a meaningful step up in
quality, but do not want to overpay for marginal improvements or for brand
name alone.''
\end{quote}

In a nutshell, each AI prompt consists of a persona-specific instruction such as the one above, and a description of the product category. Then, the prompt proceeds by offering a list of {ten products} to the AI model, which is instructed to return a list of {five products} from this offer set, representing a ranked list of its top five preferences.
Each product is represented by its category-specific product number, title, listed price, brand or store, and the category-specific attributes appearing in Table~\ref{tab:product-universe-summary}. In particular, listed price is included in every ranking prompt. The complete prompt template and model request settings are reported in Appendix~\ref{app:pl-calibration}.

For each product category, we sample \(600\) distinct offer sets of cardinality \(10\) uniformly without replacement from the collection of all cardinality-\(10\) subsets of the corresponding product universe. This choice is motivated by \citet{hajek2014minimax}, who show that random uniform product assignment yields near-minimax-optimal estimation rates for Plackett--Luce preference estimation, up to logarithmic factors. We assign \(500\) of these underlying offer sets to the training sample and the remaining \(100\) to the test sample. For each offer set, we generate two random permutations, determining the order in which its ten products are presented to the AI model. Both presentations of a given offer set are assigned to the same side of the training--test partition, so that product sets appearing in the test sample are not used for estimation. This randomization is intended to reduce the systematic association between specific products and specific positions in the presented list.

For each market segment and each randomized presentation, we instantiate the prompt using the corresponding persona instruction, product category, and ten-product offer set, and query the associated AI model for its five most preferred products from that set. Thus, each observation consists of a cardinality-\(10\) offer set together with an ordered partial ranking
\(
    (i_1,\ldots,i_5).
\)

The resulting design contains three product categories, \(600\) underlying
offer sets per category, two presentation orders per offer set, and nine
model--persona segments. Hence, the total number of observations is
\(
    3\times 600\times 2\times 9
    =
    32{,}400.
\)
Of these,
\(
    27{,}000
\)
belong to the training sample, while
\(
    5{,}400
\)
form the test sample. Equivalently, each category--model--persona cell contains \(1{,}000\) training rankings and \(200\) test rankings.

\subsection{Estimating the Plackett--Luce Model}
\label{sec:pl-estimation}
In this section, we present our estimation procedure for the stochastic
AI-ranking model and then report the resulting estimates and diagnostic
statistics. We estimate the Plackett--Luce parameters using a ridge-regularized likelihood criterion based on the standard PL likelihood and its partial-ranking variants; see, e.g.,
\citet{plackett1975analysis} and \citet{hajek2014minimax}.

\paragraph{Training data and likelihood function.}
For presentation purposes, let us consider a category with product universe \(N=\{1,\ldots,n\}\) and a market segment \(t\). Recall that segment \(t \in \cal T\) corresponds to a particular persona--AI-model pair.
We assume that the AI model's recommendations for this segment are generated according to a Plackett--Luce model with product-specific weights \(w_{it}>0\) for each product \(i\in N\).
Suppose that, for segment \(t\), we observe a set of independent training samples \(
    \mathcal D_t
    =
    \left\{
        \left(S_d,(i_{d,1},\ldots,i_{d,K})\right)
    \right\}_{d=1}^{D_t},
\)
where \(S_d\subseteq N\) is the offer set presented in observation \(d\), and \((i_{d,1},\ldots,i_{d,K})\) is the ordered top-\(K\) recommendation returned by the AI model.
Therefore, the likelihood of this observed set of training samples, given the Plackett--Luce weights \(w_{\cdot t}=(w_{1t},\ldots,w_{nt})\), is defined as the probability of jointly observing the ordered top-\(K\) rankings \((i_{d,1},\ldots,i_{d,K})\) for each offer set \(S_d\), \(d=1,\ldots,D_t\), assuming that AI recommendations are generated according to the Plackett--Luce model with weights \(w_{\cdot t}\). We denote this likelihood function by
\begin{equation*}
    \mathcal L_t(w_{\cdot t}\mid\mathcal D_t)
    =
    \Pr_{w_{\cdot t}}\!\left(
        (i_{d,1},\ldots,i_{d,K})_{d=1}^{D_t}
        \,\middle|\,
        (S_d)_{d=1}^{D_t}
    \right).
\end{equation*}
Using Lemma~\ref{lem:pl-ranking-probabilities}, we can express the likelihood of the observed partial rankings as
\begin{equation}
    \mathcal L_t(w_{\cdot t}\mid\mathcal D_t)
    =
    \prod_{d=1}^{D_t}
    \prod_{q=1}^{K}
    \frac{w_{i_{d,q}t}}{ \sum_{j \in S_d\setminus\{i_{d,1},\ldots,i_{d,q-1}\} } w_{jt}   }.
    \label{eq:pl-partial-likelihood}
\end{equation}
Substituting \(w_{it}=\exp(u_{it})\) and taking logarithms yields
\begin{equation}
    \ell_t(u_{\cdot t}\mid\mathcal D_t)
    =
    \log \mathcal L_t((\exp(u_{it}))_{i=1}^n\mid\mathcal D_t)
    =
    \sum_{d=1}^{D_t}
    \sum_{q=1}^{K}
    \left[
        u_{i_{d,q}t}
        -
        \log\!\left(
            \sum_{j\in
            S_d\setminus\{i_{d,1},\ldots,i_{d,q-1}\}}
            \exp(u_{jt})
        \right)
    \right].
    \label{eq:pl-log-likelihood}
\end{equation}

\paragraph{Concavity of the log-likelihood function.}
The utility parameterization above is particularly convenient from a computational perspective, since the resulting estimation problem is concave. Namely, the log-likelihood function above is concave in the product utilities \(u_{\cdot t}=(u_{1t},\ldots,u_{nt})\).

Indeed, for each observation \(d\) and rank \(q\), the term \(u_{i_{d,q}t}\) is linear in \(u_{\cdot t}\). Additionally, the function
$
    f: x\mapsto\log(\sum_j \exp x_j )
$
is convex. To see this, for any \(x,y \in\mathbb R^n\) and $t\in [0,1]$, we have
\begin{align*}
    f((1-t)x+ty)
    = \log\left(\sum_{i=1}^n \exp\left((1-t)x_i + ty_i\right)\right) &=
    \log\left(\sum_{i=1}^n \exp\left(x_i\right)^{1-t}\exp(y_i)^t\right)\\
    &\le
    \log\left(\left(\sum_{i=1}^n \exp\left(x_i\right)\right)^{1-t}\left(\sum_{i=1}^n \exp\left(y_i\right)\right)^t\right)\\
    &= (1-t)f(x)+tf(y),
\end{align*}
where the inequality follows from H\"older's inequality.
Hence the second term in each summand of
\eqref{eq:pl-log-likelihood} is convex. Each summand is therefore
concave, and summation preserves concavity.

\paragraph{Regularized log-likelihood.} Concavity makes the estimation problem computationally tractable, but it does
not by itself guarantee that the unregularized log-likelihood admits a finite
maximizer. In particular, the observed rankings may exhibit separation. For
example, suppose that, within a given market segment, a product is ranked first
every time it appears in a training offer set. Holding the remaining utilities
fixed, increasing the utility of this product increases the likelihood of each
such observation, and the likelihood may approach its supremum only as the
corresponding utility diverges. Thus, directly maximizing the unregularized
likelihood can yield an ill-posed estimation problem even though the
log-likelihood is concave.

To ensure a finite and well-defined estimate, we therefore use
ridge-regularized maximum likelihood. For each market segment \(t\), we
estimate the utility vector \(u_{\cdot t}\) by maximizing the log-likelihood minus a quadratic
penalty. Namely,
\begin{equation}
    \widehat u_{\cdot t}
    =
    \argmax_{u_{\cdot t}\in\mathbb R^n}
    \left\{
        \ell_t(u_{\cdot t}\mid\mathcal D_t)
        -
        \frac{\theta}{2}\lVert u_{\cdot t}\rVert_2^2
    \right\},
    \label{eq:pl-ridge-estimator}
\end{equation}
where \(\theta\) is a regularization parameter. Since the log-likelihood $\ell_t(\cdot \mid \mathcal D_t)$ is concave and the quadratic penalty \(\frac \theta 2\lVert\, \cdot\,\rVert_2^2\) is strictly convex, the resulting penalized objective is strictly concave and admits a unique finite maximizer.

Before evaluating the held-out test sample, we examined \(\theta\in\{0,\allowbreak 0.0025,\allowbreak 0.01,\allowbreak 0.04,\allowbreak 0.16,\allowbreak 0.64\}\) using only the training data, where \(\theta=0\) is the unregularized estimator. We chose \(\theta=0.04\) since it substantially reduced the most extreme differences between fitted utilities while increasing the average training negative log-likelihood (NLL) by less than \(0.01\) per ranking; larger values produced noticeably larger increases in training NLL. 
We solve each segment-specific optimization problem in Python using SciPy's L-BFGS-B routine with the analytic gradient, initializing all utilities at zero.

\subsection{Out-of-Sample Validation}
\label{sec:ranking-validation}

For this validation, we use the \(5{,}400\) held-out rankings from the three product categories. Table~\ref{tab:ai-ranking-validation} reports several
measures of predictive performance for the pooled sample, and separately by product category and AI model.
Our validation metrics are defined as follows.
\begin{itemize}[
    leftmargin=1.2em,
    itemsep=0.2em,
    topsep=0.3em,
    parsep=0pt,
    before={\setlength{\emergencystretch}{2em}}
]
    \item \textit{Held-out NLL.} This is the average negative log-likelihood that
    the fitted PL model assigns to the observed ordered top-five rankings in
    the held-out sample, using the likelihood function in
    \eqref{eq:pl-partial-likelihood}. Lower NLL means that the model assigns
    greater probability to the observed recommendations. Because its absolute scale is difficult to interpret, we compare held-out NLL with the training and uniform reference values below.

    \item \textit{Training NLL.} This is the same average negative
    log-likelihood computed on the observations used for estimation. It
    provides a direct reference for out-of-sample performance: a held-out NLL close
    to the training NLL indicates little deterioration on previously unseen
    offer sets, whereas a large increase would indicate degradation, implying that the model does not generalize well.

    \item \textit{Top-\(K\) set agreement.} For \(K\in\{1,3,5\}\), this is the
    fraction of held-out observations for which the unordered set of the AI's
    first \(K\) recommendations coincides exactly with the \(K\) offered
    products having the highest fitted PL utilities. Thus, order within the
    first \(K\) positions is ignored.

    \item \textit{Jaccard similarity.} The Jaccard similarity of two sets $A$ and $B$ is defined as the ratio ${|A \cap B|/|A \cup B|}$, and provides a measure of similarity of the two sets between $0$ and $1$. In Table~\ref{tab:ai-ranking-validation}, we report the average Jaccard similarity between the set of top five products recommended by the AI and the set of top five products predicted by the estimated PL model.
\end{itemize}

For reference, a PL model with equal worth for all ten offered products assigns
probability
\(
    1/(10\cdot 9\cdot 8\cdot 7\cdot 6)
\)
to any ordered top-five ranking. Its NLL is therefore
\(
    \log(10\cdot 9\cdot 8\cdot 7\cdot 6)=10.316921.
\)

\begin{table}[t]
\centering
\caption{Training fit and out-of-sample predictive performance of the estimated PL models.}
\label{tab:ai-ranking-validation}
\footnotesize
\begin{tabular*}{\linewidth}{@{\extracolsep{\fill}}lrrrrrrr@{}}
\toprule
Sample & Test rankings & Train NLL & Test NLL & Top-1 & Top-3 & Top-5 & Jaccard \\
\midrule
All categories      & 5,400 & 3.360 & 3.312 & 78.8\% & 68.4\% & 66.6\% & 0.879 \\
\addlinespace[2pt]
\multicolumn{8}{l}{\textit{By product category}} \\
Lightning chargers & 1,800 & 3.712 & 3.624 & 78.6\% & 64.5\% & 61.4\% & 0.859 \\
Robot vacuums       & 1,800 & 3.117 & 3.164 & 77.3\% & 70.6\% & 69.7\% & 0.891 \\
Pet leashes         & 1,800 & 3.249 & 3.148 & 80.6\% & 70.1\% & 68.8\% & 0.886 \\
\addlinespace[2pt]
\multicolumn{8}{l}{\textit{By AI model}} \\
ChatGPT              & 1,800 & 4.337 & 4.227 & 73.6\% & 59.3\% & 56.9\% & 0.838 \\
Claude           & 1,800 & 3.933 & 3.871 & 74.0\% & 64.0\% & 61.3\% & 0.860 \\
Gemini              & 1,800 & 1.809 & 1.837 & 88.9\% & 81.9\% & 81.7\% & 0.938 \\
\bottomrule
\end{tabular*}
\end{table}

\paragraph{Analysis.}
The pooled training and held-out NLLs are \(3.359555\) and \(3.311885\), respectively, indicating essentially no deterioration in predictive performance on offer sets that were not used for estimation. The held-out NLL is also far below the uniform benchmark of \(10.32\), as are all category- and model-level held-out NLLs.

The agreement metrics provide a complementary view. In the pooled sample, the
fitted rank center identifies the product ranked first by the AI in \(78.8\%\)
of held-out observations, reproduces the unordered first-three set in
\(68.4\%\), and exactly reproduces the unordered Top-5 set in \(66.6\%\). The
mean Top-5 Jaccard similarity is \(0.879\). Although the AI recommendations
exhibit stochastic variation, these results indicate that much of that
variation is organized around a stable preference ordering captured by the
estimated utilities. Performance varies across product categories and more
substantially across AI models. In this experiment, Gemini exhibits lower NLL and higher agreement than ChatGPT and Claude.

As noted at the beginning of this section, ordering products by their fitted PL utilities gives a natural deterministic approximation to the AI’s rankings. In Section~\ref{sec:numerics}, we optimize assortments under this approximation and evaluate their revenue under stochastic PL mediation.

\section{Experimental Effects of AI Mediation}
\label{sec:numerics}
We now embed the kernels estimated in Section~\ref{sec:ai-ranking-empirics} within a controlled semi-synthetic assortment experiment. Here, the AI-ranking component is estimated from observed LLM rankings, while the customer's MNL preferences are generated synthetically. As described below, customer utility combines a component aligned with the fitted AI-ranking preferences, capturing the part of customer preferences reflected in the AI recommendations, with an independent idiosyncratic component capturing preference variation not reflected in those recommendations. We use this experiment to study the following questions:
\begin{enumerate}
    \item[(i)] {\em How does mediation change the retailer's optimal assortment?}
    \item[(ii)] {\em How costly it is to ignore the mediation layer in terms of revenue?}
    \item[(iii)] {\em How much revenue is lost under PL mediation when the assortment is optimized using the deterministic ranking induced by the fitted AI utilities?}
\end{enumerate}

\subsection{Experimental Setup}
\label{sec:economic-experiment-design}
\label{sec:numerics-data}

\paragraph{Market segments.}
As previously mentioned, the market segments we consider represent unique combinations of customer persona and AI models. As such, it is useful to indicate these two components
in each market segment. Let \({\mathcal P} = \{\text{budget},\allowbreak\text{value},\allowbreak\text{premium}\}\) and \(\mathcal Q =  \{\text{ChatGPT},\allowbreak\text{Claude},\allowbreak\text{Gemini}\}\) denote the sets of personas and AI models, respectively. Throughout this section, rather than using the generic segment index \(t\), we index the nine
segments by \((p,q)\in\mathcal P\times\mathcal Q\), where \(p\) denotes the
persona and \(q\) the AI model. The segments receive equal probability,
\(\lambda_{p,q}=1/9\).

For segment \((p,q)\), the fitted PL utility \(u_{i,(p,q)}\), and hence the PL
weight \(w_{i,(p,q)}=\exp(u_{i,(p,q)})\), governs the AI's ranking model,
while \(v_{i,(p,q)}\) denotes the customer's MNL preference weight. We assume that
customer preferences depend on the persona but not on the AI model, i.e., $v_{i,(p,q)}
    =
    v_{i,(p,q')}$
for all $i\in N$, $p\in\mathcal P$ and $
    q,q'\in\mathcal Q$.
We therefore write \(v_{ip}=v_{i,(p,q)}\) for these common customer-side
weights. Thus, holding the persona fixed, variation across AI models changes the empirically estimated AI-ranking kernel but not the customer-choice primitives.

\paragraph{Synthetic construction of customer preferences.}
Customer preferences are not observed in the AI-ranking data, and we should therefore construct them synthetically. Our construction is designed to capture two features. First, since each AI is asked to recommend products for a given persona, it is natural for some component of that persona's customer preferences to be related to the preferences revealed by the fitted AI rankings. Second, the AI recommendations need not capture every component of customer preference. To reflect these features, we therefore construct customer utility from two components: a persona-level
component derived from the fitted AI-ranking utilities and an independent
Gaussian component representing idiosyncratic customer preference variation.
An alignment parameter \(\rho\in[0,1]\) controls the relative importance of
these two components.

Specifically, for each persona \(p\), we first standardize each model-specific fitted utility vector across products,
\[
    \widetilde u_{i,(p,q)}
    =
    \frac{u_{i,(p,q)}-\mu_{pq}}{\sigma_{pq}},
\]
where \(\mu_{pq}\) and \(\sigma_{pq}\) are the cross-product mean and standard deviation of \(u_{\cdot,(p,q)}\). This standardization puts all models’ utilities on a common scale, so that differences in their original scales do not give one model more weight than another when we average their preferences.
We then average the three standardized
vectors and standardize once more, with \(b_{ip}
    =
    \frac{1}{3}
    \sum_{q\in\mathcal Q}\widetilde u_{i,(p,q)},
    \qquad
\) and
\(    a_{ip}
    =
    \frac{b_{ip}-\bar b_p}{s_p},
\)
where \(\bar b_p\) and \(s_p\) are the cross-product mean and standard deviation of \(b_{\cdot p}\). The purpose of this second standardization becomes clear in the next step, where we generate customer utility as a weighted combination of these averaged preferences and independent standard Gaussian noise. Putting both components on the same variance scale allows the alignment parameter \(\rho\) to control their relative contributions.
The resulting \(a_{ip}\) is a persona-level summary of the fitted
AI preferences for product \(i\), obtained by averaging across
the three models.

For each product category, we generate multiple realizations of the random customer-preference component. Let \(\ell\) index these realizations, which we refer to as replications.
For replication \(\ell\), we generate customer utility according to
\begin{equation*}
    \beta_{ip}^{(\ell)}(\rho)
    =
    \rho a_{ip}
    +
    \sqrt{1-\rho^2}\,z_{ip}^{(\ell)},
\end{equation*}
where the \(z_{ip}^{(\ell)}\) are independent standard Gaussian random variables,
and the alignment parameter \(\rho\) varies in the set
\(
    \{0,.25,.50,.75,.90,.95,.99,1\}.
\)
The same Gaussian perturbation is reused across alignment values within a
category and replication, so comparisons across \(\rho\) are paired. {We choose the coefficient \(\sqrt{1-\rho^2}\) so that the squared weights of the two standardized components sum to one. This normalization keeps the overall variance scale of customer utility comparable as alignment varies.} For every customer persona $p \in {\cal P}$, we set the MNL preference weight of persona $p$ for product $i \in N$ in the $\ell^{\text{th}}$ replication as:
\[
    v_{ip}^{(\ell)}(\rho)
    =
    c\exp\!\left(\beta_{ip}^{(\ell)}(\rho)\right),
\]
with the outside-option weight normalized to one. Here, the common scale \(    c=0.248,\) is chosen to target an average customer product weight of approximately \(0.40\)
relative to the outside option. Thus, \(\rho=0\) makes customer utility
independent of the persona-level AI component, whereas \(\rho=1\) removes the idiosyncratic component and makes customer utility fully aligned with that component. Intermediate values interpolate between these two cases.

For every \(\rho<1\), we generate \(30\) replications per category. At
\(\rho=1\), the Gaussian component vanishes, meaning that all replications coincide and
only one endpoint instance is required per category. This design yields \(30 \times 7 + 1 = 211\) instances per category and \(633\) instances in total.

\paragraph{Policies.}
We consider three assortment-selection policies. Whenever a policy admits
multiple optimal assortments, we select one with smallest cardinality, and then the lexicographically smallest among those assortments.

\begin{itemize}
    \item \emph{Optimal PL policy.} The policy returns \(S_{\mathrm{PL}}^\star\), an optimal solution to \ref{prob:pl-ai-ao} under the fitted PL model.
    
    \item \emph{Deterministic plug-in policy.} The policy defines an instance of \ref{prob:d-ai-ao} with the same customer weights \(v_{ip}\) and the deterministic ranking induced by the fitted utilities \(u_{i,(p,q)}\). It returns \(S_{\mathrm D}^\star\), an optimal solution to this deterministic instance.
    
    \item \emph{Direct-MNL benchmark.} This policy ignores the AI mediation layer and returns \(S_{\mathrm{MNL}}^\star\), an optimal solution to \[
    \max_{S\in\mathcal F}
    \Rev^{\mathrm{MNL}}(S)
    =
    \sum_{(p,q)\in\mathcal P\times\mathcal Q}
    \lambda_{p,q}R_{(p,q)}(S),
    \] 
    where we recall that $\lambda_{p,q}$ is the arrival probability of segment $(p,q) \in {\cal P}\times {\cal Q}$, which is set to $1/9$ for all segments.
    Thus, the retailer optimizes as though its assortment were itself the customer-facing choice set. Since customers in each segment choose directly from the offered assortment according to their MNL model, this benchmark is equivalent to assortment optimization under a mixture of MNL models.
\end{itemize}
In this experiment, the assortments returned by all three policies above are computed through exhaustive enumeration of all feasible assortments.

\paragraph{Model parameters.} For each product \(i\), we set the retailer's revenue \(r_i\) equal to the listed product price in the underlying product metadata.
Recalling that \(B\) represents the retailer's assortment-size limit and \(K\) denotes the AI's display capacity, we solve every instance under
\(
    (B,K)\in
    \{(8,3),\allowbreak(8,5),\allowbreak(10,3),\allowbreak(10,5),\allowbreak(10,7),\allowbreak(12,5),\allowbreak(12,7)\}
\). Thus, the same \(633\) semi-synthetic instances are evaluated
in each of seven capacity environments, for \(4{,}431\)
configuration--instance observations.
For any candidate assortment \(S\), we measure performance by its relative revenue loss with respect to the exact PL-optimal assortment \(S_{\mathrm{PL}}^\star\), assuming a PL-mediation ground truth, defined as
\begin{equation}
    \operatorname{Loss}(S)
    =
    1-
    \frac{\Rev^{\mathrm{PL}}(S)}
         {\Rev^{\mathrm{PL}}(S_{\mathrm{PL}}^\star)}.
    \label{eq:policy-revenue-loss}
\end{equation}
Since $S^*_{\mathrm{PL}}$ is a maximizer of the expected revenue function $\Rev^{\mathrm{PL}}$, the above loss is in the interval $[0,1]$, and represents the fraction of revenue lost from an optimal solution assuming a PL-mediation ground truth.

\subsection{Experimental Effects of AI Mediation}
\label{sec:mediation-effects}

In this section, we study how the presence of the AI mediation layer affects the retailer's optimal assortment and the revenue consequences of ignoring that layer. Concurrently, we study how a deterministic plug-in policy performs relative to the optimal PL assortment.
We first pool across product categories, alignment values, and replications within each capacity environment. Each reported capacity-specific statistic therefore summarizes the same \(633\) underlying instances. We then condition on the alignment parameter \(\rho\) to study how the revenue effects of mediation change as customer preferences become more closely aligned with the AI-ranking preferences.

\subsubsection{Pooled Analysis}

\paragraph{Mediation regimes.}
\label{sec:numerics-regimes}
The first question we address is whether the introduction of AI mediation changes the customer-facing assortment compared to the direct-MNL benchmark. Indeed, our modeling assumptions do not preclude the possibility that the PL-optimal assortment coincides with the direct-MNL assortment. {In particular, if the PL-optimal assortment  $S^*_{\mathrm{PL}}$ is small enough to fit within the display capacity, then every offered product is displayed and the PL-mediated and direct-MNL revenues coincide at that assortment.}

This motivates a first distinction between two cases:
If \(|S_{\mathrm{PL}}^\star|>K\), the AI displays only a subset of the
offered products; if \(|S_{\mathrm{PL}}^\star|\le K\), every offered
product is displayed. However, the absence of filtering does not mean
that mediation has no effect: the retailer may choose a smaller
assortment precisely to avoid AI filtering.
Within the case \(|S_{\mathrm{PL}}^\star|\le K\), we therefore compare
the PL-optimal assortment with the direct-MNL benchmark. If
\(|S_{\mathrm{MNL}}^\star|>K\), the retailer contracts its assortment
relative to the direct-MNL benchmark so that every offered product is
displayed; we call this endogenous filtering avoidance. If
\(|S_{\mathrm{MNL}}^\star|\le K\), the retailer would choose an
assortment that fits within the display capacity even without
mediation; we call this structurally inactive mediation.
We thus classify each instance into three empirical regimes:
\begin{itemize}
    \item[(A)] Active filtering:
    \(|S_{\mathrm{PL}}^\star|>K\).

    \item[(B)] Endogenous filtering avoidance:
    \(|S_{\mathrm{PL}}^\star|\le K\) and
    \(|S_{\mathrm{MNL}}^\star|>K\).

    \item[(C)] Structurally inactive mediation:
    \(|S_{\mathrm{PL}}^\star|\le K\) and
    \(|S_{\mathrm{MNL}}^\star|\le K\).
\end{itemize}

\begin{figure}[!ht]
\centering
\includegraphics[width=0.60\linewidth]{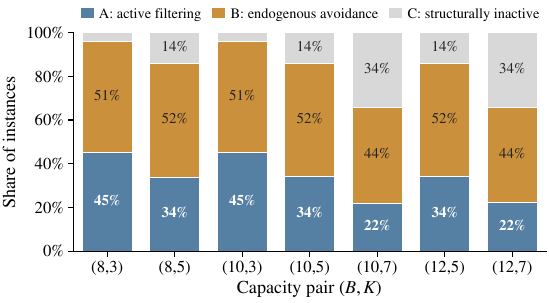}
\caption{Mediation regimes across the seven capacity environments.}
\label{fig:mediation-regimes}
\end{figure}
Figure~\ref{fig:mediation-regimes} shows that mediation is active throughout much of the capacity grid, either through realized filtering or through assortment contraction that avoids
filtering. At \(K=3\), regimes A and B
account for \(45.3\%\) and
\(50.7\%\) of instances, respectively, leaving only \(3.9\%\) in regime C. At \(K=5\), the corresponding shares are \(34.1\%\), \(51.7\%\), and \(14.2\%\). Even at \(K=7\), \(22.2\%\) of instances exhibit active filtering and \(43.8\%\) endogenous avoidance. Thus, a substantial part of the mediated share reflects endogenous avoidance rather than realized filtering. The resulting mediated share, A+B, declines from \(96.1\%\) at \(K=3\) to \(85.8\%\) at \(K=5\) and \(66.0\%\) at \(K=7\). Thus, within the tested grid, this coarse regime classification varies significantly with the display capacity \(K\) and is essentially invariant to the retailer assortment cardinality bound \(B\).

\paragraph{Strategic assortment contraction.}
\label{sec:numerics-contraction}
The regime decomposition points to a broader strategic response: a retailer that accounts for AI mediation often chooses a smaller assortment than one treating all offered products as directly customer-facing. Figure~\ref{fig:catalog-contraction} compares assortment size under the three policies.
In particular, Panel A reports the mean assortment size under each policy, while Panel B presents a box plot of \(|S_{\mathrm{MNL}}^\star|-|S_{\mathrm{PL}}^\star|\) across the 633 instances in each capacity setting. The outer lines indicate the minimum and maximum values. The lower and upper edges of each box represent the first and third quartiles, and the horizontal line inside the box represents the median.

Across capacity environments, the direct-MNL assortment exceeds the PL-optimal assortment by \(1.48\) to \(4.15\) products on average. The difference is largest when display capacity \(K\) is smaller. For example, at \((B,K)=(10,3)\), direct-MNL selects \(8.35\) products on average while the PL oracle selects only \(4.20\); at \((10,7)\), the corresponding averages are \(8.35\) and \(6.84\). The direct assortment is strictly larger than the PL assortment in \(96.0\%\) of the \(K=3\) instances, compared with \(57.2\%\) at \((10,7)\).

\begin{figure}[!ht]
\centering
\includegraphics[width=1\linewidth]
{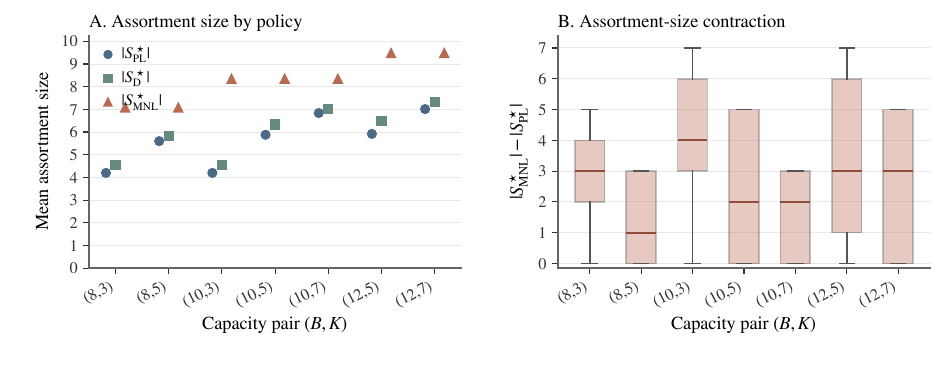}
\caption{Assortment size and assortment contraction across the seven capacity settings.}
\label{fig:catalog-contraction}
\end{figure}

This contraction has a direct interpretation in regime B. Whenever
\(|S|\le K\), every offered product is displayed and therefore \(
    \Rev^{\mathrm{PL}}(S)=\Rev^{\mathrm{MNL}}(S).
\)
Because \(S_{\mathrm{PL}}^\star\) itself lies in this all-display region in regime B, the direct-MNL problem with the additional constraint
\(|S|\le K\) attains the PL-optimal value. Regime B therefore isolates the
retailer's response to the display constraint of AI mediation: the mediation-aware retailer is
willing to forgo additional assortment capacity in order to remove the AI's discretion over which products reach the customer.

\paragraph{Revenue cost of ignoring mediation.}
\label{sec:numerics-policy-loss}
Evidently, these assortment changes have substantial revenue consequences.
Table~\ref{tab:bk-robustness} reports revenue loss separately in each
capacity environment. Mean direct-MNL revenue loss ranges from \(2.52\%\) to
\(11.37\%\), its \(95\)th percentile from \(10.36\%\) to \(30.66\%\), and its
maximum from \(23.83\%\) to \(57.08\%\). The largest losses occur when display
capacity is tight: mean direct-MNL revenue loss is \(9.80\%\) at \((8,3)\) and
\(11.37\%\) at \((10,3)\). In contrast, mean deterministic revenue loss ranges from
only \(0.08\%\) to \(0.53\%\) across the full grid, with \(95\)th percentiles
between \(0.59\%\) and \(2.70\%\).

\begin{table}[!b]
\centering
\caption{Policy performance across the seven capacity environments.}
\label{tab:bk-robustness}
\footnotesize
\begin{tabular}{rrrrrrrrrr}
\toprule
$B$ & $K$ & Mediated & Mean $|S^\star_{\mathrm{PL}}|$ & \multicolumn{3}{c}{Direct-MNL revenue loss} & \multicolumn{3}{c}{Deterministic revenue loss} \\
\cmidrule(lr){5-7} \cmidrule(lr){8-10}
 & & share & & Mean & P95 & Max & Mean & P95 & Max \\
\midrule
8 & 3 & 96.05\% & 4.20 & 9.80\% & 28.04\% & 57.08\% & 0.52\% & 2.70\% & 9.02\% \\
8 & 5 & 85.78\% & 5.60 & 4.04\% & 13.85\% & 37.96\% & 0.18\% & 0.99\% & 4.83\% \\
10 & 3 & 96.05\% & 4.20 & 11.37\% & 30.66\% & 52.11\% & 0.53\% & 2.70\% & 9.02\% \\
10 & 5 & 85.78\% & 5.88 & 5.50\% & 18.45\% & 37.90\% & 0.26\% & 1.45\% & 4.83\% \\
10 & 7 & 66.03\% & 6.84 & 2.52\% & 10.36\% & 23.83\% & 0.08\% & 0.59\% & 2.16\% \\
12 & 5 & 85.78\% & 5.92 & 6.53\% & 21.42\% & 37.90\% & 0.28\% & 1.59\% & 4.83\% \\
12 & 7 & 66.03\% & 7.01 & 3.56\% & 12.64\% & 23.83\% & 0.13\% & 1.00\% & 2.67\% \\
\bottomrule
\end{tabular}

\end{table}

Taken together, these experiments show that accounting for AI mediation often leads retailers to choose smaller assortments and can substantially improve revenue under PL mediation. The deterministic plug-in captures most of this benefit on average, with mean revenue loss of at most 0.53\% across the tested capacity environments.

\subsubsection{Role of Preference Alignment}
We next condition on \(\rho\) to examine how alignment between AI and customer preferences affects both the structure of the offered assortments and the expected revenue. For every \(\rho<1\), each
capacity point aggregates \(90\) instances, corresponding to \(30\) replications in each of the
three categories, while the \(\rho=1\) endpoint contains one category-specific
instance for each of the three categories.

\paragraph{Revenue loss and alignment.}
\label{sec:numerics-alignment}

In Figure~\ref{fig:alignment}, we report the mean revenue loss of the direct-MNL benchmark (Panel A) and the deterministic plug-in (Panel B) as a function of the alignment parameter \(\rho\). For clarity, this figure focuses on the three configurations with assortment-size limit \(B=10\); the corresponding curves for the remaining capacity configurations are reported in Appendix~\ref{app:alignment-full-grid}. The same qualitative patterns hold across the full capacity grid.

The most immediate observation is the substantial difference in scale between the two panels. At \(\rho=0\), for example, mean direct-MNL revenue loss is \(24.06\%\), \(11.83\%\), and \(5.75\%\) for \(K=3,5,7\), respectively, whereas the corresponding deterministic losses are only \(0.21\%\), \(0.03\%\), and essentially zero. The difference remains large at intermediate levels of alignment: at \(\rho=.75\), the direct-MNL losses are \(8.27\%\), \(3.89\%\), and \(1.80\%\), compared with only \(0.79\%\), \(0.23\%\), and \(0.07\%\) for the deterministic plug-in.
The gap narrows only as alignment becomes very high. Thus, over most of the alignment range, accounting only for the ordering induced by the fitted AI preferences recovers most of the revenue value of modeling mediation.

A second pattern concerns how costly it is to ignore mediation. Direct-MNL revenue loss falls sharply as customer preferences become more closely aligned with the AI-ranking preferences. Across the full capacity grid, mean direct-MNL revenue loss ranges from \(5.75\%\) to \(24.06\%\) at \(\rho=0\), compared with only \(0.59\%\) to \(5.74\%\) at \(\rho=1\); the endpoint loss is lower at \(\rho=1\) than at \(\rho=0\) in every configuration. This relationship is not perfectly monotone, as several configurations exhibit small reversals in monotonicity, but the overall decline is pronounced. When customer and AI preferences are poorly aligned, treating the retailer's assortment as though it were directly customer-facing can therefore be especially costly; as their preferences become aligned, the economic penalty of ignoring the mediation layer falls substantially.

The deterministic plug-in exhibits a different relationship with alignment. Its revenue loss remains small throughout, but is not monotone in \(\rho\) across all seven capacity configurations. In particular, greater customer--AI alignment can reduce direct-MNL loss while leaving deterministic loss unchanged or even increasing it. Hence, alignment explains when ignoring mediation is especially costly, but not when the deterministic approximation is most accurate. The latter remains a strong approximation across the alignment range, suggesting that the ordering induced by the fitted AI preferences captures substantial information about the mediation layer even when the
customer's preferences are only weakly aligned with those of the AI.
\begin{figure}[!t]
\centering
\includegraphics[width=\linewidth]
{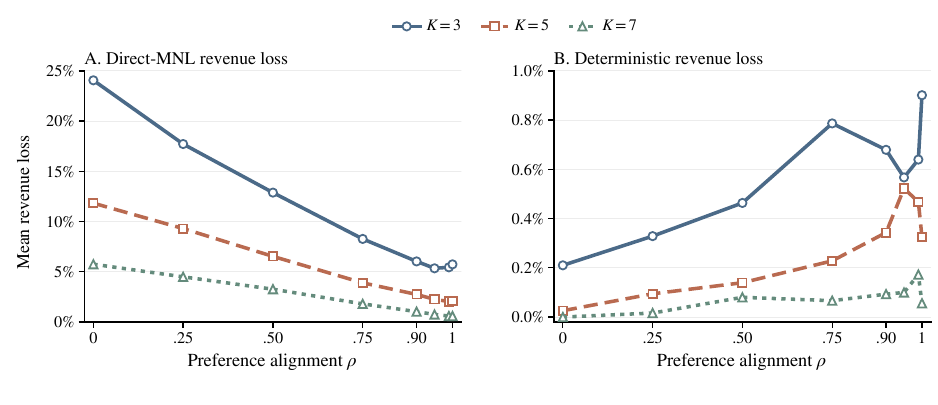}
\caption{Revenue loss by customer--AI preference alignment at \(B=10\).
Panel A reports mean direct-MNL revenue loss and Panel B reports mean
deterministic revenue loss, both for \(K\in\{3,5,7\}\).}
\label{fig:alignment}
\end{figure}

\paragraph{Assortment response to alignment.}
In Figure~\ref{fig:alignment-mechanisms}, we report the mean difference between the PL-optimal assortment size and the display capacity \(K\), and the share of instances in which the PL-optimal assortment exceeds the display capacity \(K\). Similar to Figure~\ref{fig:alignment}, the current figure focuses on the three \(B=10\) configurations, with the remaining capacity configurations reported in Appendix~\ref{app:alignment-filtering-mechanisms} for the sake of readability.

Figure~\ref{fig:alignment-mechanisms} shows that the reduction in direct-MNL revenue loss observed in Figure~\ref{fig:alignment} does not arise since mediation simply disappears at high alignment. In fact, it shows the opposite change in the mediation mechanism. At \(\rho=0\), active filtering occurs in at most \(8.9\%\) of instances across capacity environments, whereas endogenous avoidance accounts for \(62.2\%\) to \(84.4\%\).
As alignment rises, the PL-aware retailer becomes increasingly willing to offer more than \(K\) products. At \(\rho=.95\), active filtering accounts for \(44.4\%\) to \(66.7\%\) of instances, while endogenous avoidance remains sizeable at between \(22.2\%\) and \(33.3\%\).

The assortment-size statistic in Panel A draws a similar picture. At \(\rho=0\), mean \(\lvert S_{\mathrm{PL}}^\star\rvert-K\)
ranges from \(-0.74\) to \(0.13\) products across capacities; by
\(\rho=.95\), it ranges from \(0.37\) to \(1.96\). Thus, greater alignment
makes the retailer more willing to allow the AI greater discretion over which
offered products are displayed.

\begin{figure}[!t]
\centering
\includegraphics[width=\linewidth]
{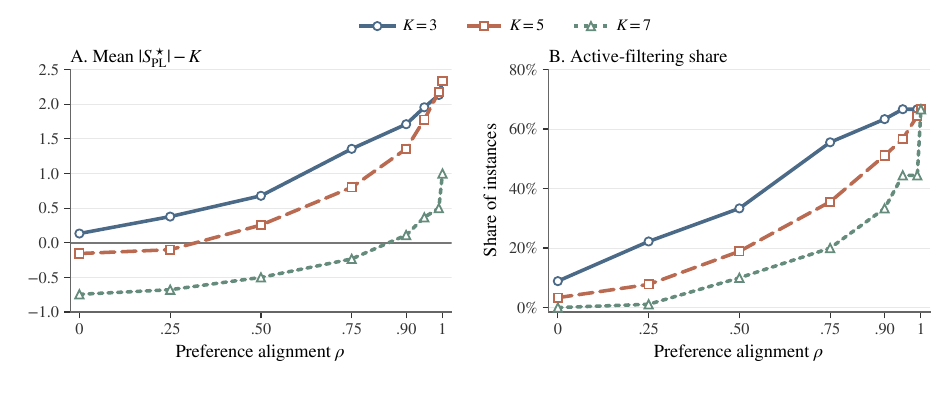}
\caption{Assortment response and mediation mechanism by preference alignment.
Panel A reports mean
\(\lvert S_{\mathrm{PL}}^\star\rvert-K\)
for each capacity environment. Panel B reports the corresponding share of
instances in the active-filtering regime.}
\label{fig:alignment-mechanisms}
\end{figure}

\section{Deterministic Ranking Mediation}
\label{sec:deterministic}

Section~\ref{sec:mediation-effects} shows that deterministic mediation serves as a practical surrogate for a PL-mediated ground truth.  This empirical performance makes
deterministic ranking mediation a natural operational surrogate. We now study the resulting assortment optimization problem, by characterizing its computational status under various primitive regimes, and developing a fully polynomial-time approximation scheme (FPTAS) for any fixed number of segments.

\subsection{Computational Complexity}
\label{sec:det-hardness}

In what follows, we characterize the computational complexity of \ref{prob:d-ai-ao} across the main regimes of the model parameters. Recall from Section~\ref{subsubsec:model-deterministic} that under deterministic mediation, we assume that each segment \(t\in {\cal T}\) is associated with a ranking \(\sigma_t \in \mathfrak S_N\),
and that the AI displays
\(
    C_t^{\mathrm D}(S)
    =
    \operatorname{TopK}_{\sigma_t}(S).
\)
The retailer's expected revenue is
\[
    \Rev^{\mathrm D}(S)
    =
    \sum_{t\in\mathcal T}
    \lambda_t
    R_t\!\left(C_t^{\mathrm D}(S)\right).
\]
Table~\ref{tab:det-computational-landscape} summarizes the resulting complexity landscape; we establish the complexity of each of its considered regimes, while the fixed-\(m\), arbitrary-\(K\) approximation result is developed in Section~\ref{sec:det-fptas}.

\begin{table}[htbp]
\centering
\small
\begin{tabular}{lccc}
\toprule
& \(m=1\) & Fixed \(m\ge 2\) & Arbitrary \(m\) \\
\midrule
Fixed \(K\)
& Polynomial
& Polynomial
& \(O(n^{1-\epsilon})\)-inapproximable \\
Arbitrary \(K\)
& Polynomial
& NP-hard; FPTAS
& \(O(n^{1-\epsilon})\)-inapproximable \\
\bottomrule
\end{tabular}
\caption{Computational complexity and approximability of deterministic
AI-mediated assortment optimization.}
\label{tab:det-computational-landscape}
\end{table}
We begin with the general setting, in which the number of segments \(m\) and the display capacity \(K\) may both scale with the input, and then examine the tractability that emerges when one or both of these parameters are fixed.

\paragraph{Arbitrary $\boldsymbol{m}$.} The next theorem shows that when the number of segments $m$ may scale with the input, deterministic AI mediation leads to a strongly-inapproximable assortment optimization problem even when the display capacity $K = 1$.

\begin{theorem}
\label{thm:det-hardness}
For any fixed $\eps>0$, \ref{prob:d-ai-ao} cannot be approximated in polynomial time within factor $O(n^{1-\eps})$, even when $K=1$ and $B=n$, unless $\mathrm{P} = \mathrm{NP}$.

\end{theorem}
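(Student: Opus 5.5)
We reduce from assortment optimization under ranking-based preferences, in the form studied by \citet{aouad2018approximability}. In that problem, there are \(n\) products with revenues \(r_i>0\) and a distribution over customer types. Each type \(t\) arrives with probability \(\lambda_t\) and carries a preference list, that is, an ordered subset \(L_t\subseteq N\). Given an offered set \(S\), a type-\(t\) customer buys the highest-ranked product of \(L_t\) that belongs to \(S\). If \(S\cap L_t=\emptyset\), the customer leaves without purchasing. \citet{aouad2018approximability} show that, for every fixed \(\eps>0\), this problem admits no polynomial-time \(O(n^{1-\eps})\)-approximation unless \(\mathrm{P}=\mathrm{NP}\), with no cardinality constraint on \(S\). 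Our plan is an approximation-preserving reduction from this problem to \ref{prob:d-ai-ao} with \(K=1\) and \(B=n\). The product set is kept identical, so \(n\) does not change and the inapproximability factor transfers directly.

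The construction simulates a preference list by combining deterministic mediation with the customer's MNL stage. For each type \(t\), create a segment with the same \(\lambda_t\). Let \(\sigma_t\) be any permutation of \(N\) that ranks the products of \(L_t\) first, in the order of \(L_t\), and places all products outside \(L_t\) afterwards. Since \(K=1\), the AI shows the single product \(C_t^{\mathrm D}(S)=\{i\}\), where \(i\) is the \(\sigma_t\)-first product of \(S\). Choose customer weights \(v_{it}=M\) for \(i\in L_t\) and \(v_{it}=0\) for \(i\notin L_t\), for a large constant \(M\). Then
\[
R_t(\{i\})=\frac{M}{1+M}\,r_i \quad\text{if } i\in L_t,\qquad R_t(\{i\})=0 \quad\text{otherwise}.
\]
If \(S\) meets \(L_t\), the displayed product is exactly the top choice under \(L_t\). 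If \(S\) misses \(L_t\), the displayed product lies outside \(L_t\) and yields zero revenue, which matches the no-purchase outcome. Consequently, for every \(S\subseteq N\),
\[
\Rev^{\mathrm D}(S)=\frac{M}{1+M}\,\Rev^{\mathrm{rank}}(S).
\]
This is a uniform rescaling, so any \(\alpha\)-approximation for \ref{prob:d-ai-ao} yields an \(\alpha\)-approximation for the ranking problem. The constant \(M\) can even be taken to be \(1\), giving a factor of \(1/2\).

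The remaining steps are routine. We must confirm that the construction runs in polynomial time: the number of segments equals the number of types in the hard instance, which is polynomial, and each \(\sigma_t\) is written in \(O(n)\) space. Taking \(B=n\) makes every subset feasible, matching the unconstrained source problem. Weights \(v_{it}=0\) are allowed, since the model only requires \(v_{it}\ge 0\). If strictly positive weights were wanted, we could use a tiny \(\delta\) in place of \(0\) and absorb the resulting error.

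The main obstacle is matching the exact form of the hard instances of \citet{aouad2018approximability}. We need to check that the bound is stated in terms of the number of products \(n\), not the number of types, and that it applies to the general-distribution, unconstrained version. If their hardness is stated for full rankings with a no-purchase option placed in the ranking, the same construction works: products ranked below the no-purchase option are exactly those not in \(L_t\), and they receive weight \(0\).
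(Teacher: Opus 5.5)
Your proposal is correct and is essentially the paper's reduction: one segment per preference list, $K=1$, $B=n$, a ranking $\sigma_t$ that lists $L_t$ first in order, and zero MNL weights outside $L_t$. The only difference is cosmetic. The paper sets $v_{it}=1$ on $L_t$ and doubles revenues, $r_i=2\bar r_i$, to get $\Rev^{\mathrm D}(S)=\Rev^{\mathrm{RP}}(S)$ exactly, whereas you keep the revenues and accept the uniform factor $M/(1+M)$, which preserves approximation ratios just as well.
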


The proof proceeds by a gap-preserving reduction from assortment optimization under ranking preferences, which is NP-hard to approximate within factor \(O(n^{1-\eps})\) for every fixed \(\eps>0\) \citep[Theorem~2.1]{aouad2018approximability}. We associate one customer segment with each preference list and set \(K=1\), using zero customer-side MNL weights for products outside the corresponding preference list. The resulting deterministic-mediation instance preserves the expected revenue of every assortment exactly. The complete reduction is provided in Appendix~\ref{app:det-hardness}.
We therefore turn to parameter regimes in which the number of segments or the display capacity is restricted.
\paragraph{Fixed \(\boldsymbol{m}\) and fixed \(\boldsymbol{K}\).}

Here, we consider the case in which both the number of segments $m$ and the display capacity $K$ are fixed. The key observation is that given an optimal assortment, products that are not displayed to any segment can be removed without affecting the expected revenue. Consequently, any optimal assortment can be transformed into another optimal assortment of size at most $mK$, by removing all such products.

\begin{lemma}
\label{lem:det-support-reduction}
There exists an optimal assortment \(S^*\in\mathcal F\) satisfying
\(
    |S^*|\le mK.
\)

\end{lemma}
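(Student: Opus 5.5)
Take any optimal assortment $S^\circ\in\mathcal F$; one exists because $\mathcal F$ is finite and nonempty. Define
\[
    S^* \;=\; \bigcup_{t\in\mathcal T} C_t^{\mathrm D}(S^\circ),
\]
the set of products displayed to at least one segment. Since $S^*\subseteq S^\circ$, we have $|S^*|\le|S^\circ|\le B$, so $S^*\in\mathcal F$. Each displayed set has size $|C_t^{\mathrm D}(S^\circ)|=\min\{K,|S^\circ|\}\le K$, so a union bound gives $|S^*|\le mK$. It remains to show that $\Rev^{\mathrm D}(S^*)=\Rev^{\mathrm D}(S^\circ)$, which makes $S^*$ optimal.

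The key step is to prove that, for every segment $t$, $C_t^{\mathrm D}(S^*)=C_t^{\mathrm D}(S^\circ)$. Write $C=C_t^{\mathrm D}(S^\circ)$, so that $C\subseteq S^*\subseteq S^\circ$. We use the following monotonicity property of $\operatorname{TopK}_{\sigma}$: if $C=\operatorname{TopK}_\sigma(S^\circ)$ and $C\subseteq T\subseteq S^\circ$, then $\operatorname{TopK}_\sigma(T)=C$. We split into two cases according to the size of $S^\circ$.
\begin{itemize}
\item If $|S^\circ|\le K$, then $C=S^\circ$, which forces $T=S^\circ$, and the claim is immediate.
\item Otherwise $|C|=K$, so $|T|\ge K$ and $\operatorname{TopK}_\sigma(T)$ consists of exactly $K$ products. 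Every product of $T\setminus C$ lies in $S^\circ\setminus C$ and is therefore ranked by $\sigma$ below all products of $C$. Hence the $K$ highest-ranked products of $T$ are exactly those of $C$.
\end{itemize}
Applying this with $\sigma=\sigma_t$ and $T=S^*$ gives the claim for each $t$.

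With the displayed sets unchanged, each term $\lambda_t R_t(C_t^{\mathrm D}(\cdot))$ is unchanged, so $\Rev^{\mathrm D}(S^*)=\Rev^{\mathrm D}(S^\circ)$. The only step needing care is the monotonicity argument, specifically the case split on $|S^\circ|\le K$ versus $|S^\circ|>K$ (in the first case the union bound gives $|S^\circ|\le K\le mK$ directly). Everything else is bookkeeping.
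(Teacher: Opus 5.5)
Your proof is correct and follows the paper's argument: take an optimal assortment, restrict it to the union of the segments' displayed sets, and note that no displayed set changes. Your case split on $|S^\circ|\le K$ versus $|S^\circ|>K$ simply spells out the step the paper states in one line.
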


\begin{proof}
Consider an optimal assortment \(S^*\in\mathcal F\), and define 
\(
    \widetilde S
    =
    C_1^{\mathrm D}(S^*)\cup \cdots\cup C_m^{\mathrm D}(S^*).
\)
Since each segment $t\in \cal T$ is displayed at most \(K\) products,
\(
    |\widetilde S|
    \le
    \sum_{t\in\mathcal T}
    \left|C_t^{\mathrm D}(S^*)\right|
    \le
    mK.
\)
Moreover, \(\widetilde S\subseteq S^*\), and hence
\(\widetilde S\in\mathcal F\).
To prove that $\widetilde S$ is optimal, it remains to observe that removing products in
\(S^*\setminus\widetilde S\) does not alter any segment's displayed set. Indeed, every product removed from \(S^*\) lies below the display cutoff for every segment, whereas all products in
\(C_t^{\mathrm D}(S^*)\) remain in \(\widetilde S\). Therefore, \(
    C_t^{\mathrm D}(\widetilde S)
    =
    C_t^{\mathrm D}(S^*)
\)
for every \(t\in\mathcal T\). Consequently,
$
    \Rev^{\mathrm D}(\widetilde S)
    =
    \Rev^{\mathrm D}(S^*).
$
\end{proof}

Lemma~\ref{lem:det-support-reduction} immediately yields a polynomial-time algorithm when \(m\) and \(K\) are fixed. It suffices to enumerate all feasible assortments consisting of at most \(mK\) products and evaluate their deterministic mediated revenues. The number of candidate assortments is \(O(n^{mK})\), which is polynomial in the input size for fixed \(m\) and \(K\).

\paragraph{Arbitrary \(\boldsymbol{K}\) and \(\boldsymbol{m=1}\).}
The single-segment problem remains tractable even when the display capacity $K$ is unbounded. By Lemma~\ref{lem:det-support-reduction}, there exists an optimal assortment containing at most \(K\) products. Such an assortment is displayed in its entirety to the unique segment, meaning that the current setting reduces to
$$
    \max_{\substack{S\subseteq N\colon\\
                    |S|\le \min\{B,K\}}}
    R_1(S).
$$
This is exactly single-segment cardinality-constrained MNL assortment optimization which is known to be solvable in polynomial time \citep[Theorem~2.2]{rusmevichientong2010dynamic}. Thus, allowing the display capacity $K$ to grow does not by itself create computational hardness.

\paragraph{Arbitrary \(\mathbf K\) and fixed \(\mathbf{m}\boldsymbol{\ge} \mathbf{2}\).}
As we show next, fixing the number of segments $m$ does not suffice to recover exact polynomial-time solvability when the display capacity is allowed to grow. Indeed, consider the special case of \(K=B=n\). Here, the AI displays every product in the retailer's assortment to every segment, so that \(C_t^{\mathrm D}(S)=S\) for every \(t\in\mathcal T\). The expected revenue reduces to
$$
    \Rev^{\mathrm D}(S)
    =
    \sum_{t\in\mathcal T}
    \lambda_t
    R_t(S).
$$
This is precisely the expected revenue function under a mixture of multinomial logit models. Therefore, any assortment optimization problem under the latter model can be mapped to an \ref{prob:d-ai-ao} instance, by retaining the same MNL customer preferences across all segments, setting $K=B=n$, and fixing an arbitrary ranking. Given that assortment optimization under a mixture of multinomial logit models is NP-hard even for $m=2$ segments \citep[Theorem~1]{rusmevichientong2014assortment}, \ref{prob:d-ai-ao} is NP-hard for fixed \(m\ge2\) when \(K\) is part of the input.

This final regime completes the exact-complexity landscape in Table~\ref{tab:det-computational-landscape}. Although exact optimization remains NP-hard when \(m\) is fixed and \(K\) is arbitrary, fixing \(m\) leaves only a constant number of segment-specific revenue components. We exploit this structure in the next section to obtain an FPTAS.

\subsection{An FPTAS for a Fixed Number of Segments}
\label{sec:det-fptas}

In this section, we show that fixing \(m\) permits a fully
polynomial-time approximation scheme.   

\begin{theorem}
\label{thm:det-fptas}
For every fixed \(m\) and every \(\epsilon\in(0,1)\), a feasible
assortment \(\widehat S\in\mathcal F\) satisfying
\[
\Rev^{\mathrm D}(\widehat S)
\ge
(1-\epsilon)\max_{S\in\mathcal F}\Rev^{\mathrm D}(S)
\]
can be computed in time polynomial in the binary encoding length
of the instance and \(1/\epsilon\).
\end{theorem}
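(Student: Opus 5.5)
The plan follows the idea sketched in the introduction: enumerate threshold products, then approximate a sum of $m$ ratios by guessing denominators, which reduces each guess to a multidimensional knapsack-type check.

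\textbf{Step 1: threshold enumeration.} For each segment $t$ we guess a "threshold" $\tau_t\in N\cup\{\bot\}$. If $\tau_t\in N$, it is the lowest-ranked displayed product for segment $t$. If $\tau_t=\bot$, segment $t$ sees the whole assortment, which has fewer than $K$ products. There are $(n+1)^m$ such guesses, polynomially many for fixed $m$.

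Fix a guess and let $A_t=\{i:\sigma_t(i)\le\sigma_t(\tau_t)\}$ be the products ranked at or above $\tau_t$ (with $A_t=N$ when $\tau_t=\bot$). An assortment $S$ is consistent with the guess if three conditions hold for every segment $t$ with $\tau_t\neq\bot$: $\tau_t\in S$, $|S\cap A_t|=K$, and, when $\tau_t=\bot$, $|S|<K$. For a consistent $S$ we have $C^{\mathrm D}_t(S)=S\cap A_t$, so
\[
\Rev^{\mathrm D}(S)=\sum_{t}\lambda_t\frac{\sum_{i\in S\cap A_t} r_iv_{it}}{1+\sum_{i\in S\cap A_t}v_{it}}.
\]
This is a sum of $m$ ratios of linear functions of the indicator vector $x\in\{0,1\}^n$. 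Every feasible $S$ is consistent with exactly one guess, namely its true thresholds. So it suffices to return, for each guess, an assortment consistent with it whose revenue is within $1-\epsilon$ of the best consistent assortment, and then take the best over all guesses.

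\textbf{Step 2: guess the denominators.} For each segment, the denominator $D_t(x)=1+\sum_{i\in A_t}v_{it}x_i$ lies in $[1,1+\sum_i v_{it}]$. We round it to a geometric grid with ratio $(1+\epsilon)$, which gives $O(\log(1+\sum v)/\epsilon)$ values per segment. Guessing a vector $(d_1,\dots,d_m)$ of grid values costs $O((L/\epsilon)^m)$ guesses, where $L$ is the encoding length, which is polynomial for fixed $m$.

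For each such guess we consider the problem
\[
\max_x\sum_t \frac{\lambda_t}{d_t}\sum_{i\in A_t}r_iv_{it}x_i
\]
subject to $D_t(x)\le(1+\epsilon)d_t$ for every $t$, the consistency constraints, and $\sum_i x_i\le B$. The objective is linear in $x$.

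At the optimum $x^*$, taking $d_t$ to be the grid point just above $D_t(x^*)$ makes $x^*$ feasible, and its linear objective is at least $(1+\epsilon)^{-1}$ times the true revenue. Conversely, any feasible $x$ has true revenue at least $(1+\epsilon)^{-1}$ times its linear objective. If the linear subproblem is solved only approximately, to within $1-\epsilon$, the overall loss is $O(\epsilon)$; rescaling $\epsilon$ gives the claimed guarantee.

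\textbf{Step 3: the main obstacle, solving the subproblem.} The subproblem is a 0/1 program with:
\begin{itemize}
\item a linear objective;
\item $m$ knapsack constraints with real weights $v_{it}$;
\item $O(m)$ exact or upper cardinality constraints, which have unit weights;
\item forced inclusions $x_{\tau_t}=1$.
\end{itemize}
Cardinality constraints with unit weights can be handled exactly by dynamic programming over counts, with $O(n^{m+1})$ states.

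The real-weight knapsack constraints are the difficulty. I would handle them by rounding the weights $v_{it}$ down to multiples of $\epsilon d_t/n$. This makes each knapsack constraint take only $O(n/\epsilon)$ distinct values, so the dynamic program runs in time polynomial in $n^m$ and $1/\epsilon$ and optimizes the objective exactly.

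The rounding introduces an additive error of at most $\epsilon d_t$ in each denominator. The selected $x$ therefore satisfies $D_t(x)\le(1+2\epsilon)d_t$, which only changes constants in the argument above. Forced items and the non-displayed structure fit directly into the DP.

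Alternatively, this step can be phrased through the sums-of-ratios framework the paper alludes to. That framework requires exact solvability of a linear objective under a constant number of pseudo-polynomially encoded constraints, which is exactly multidimensional knapsack with $m$ dimensions, solvable by DP after rounding. Verifying that this rounding interacts cleanly with the exact-$K$ consistency constraints is the delicate point. The dynamic program tracks those counts exactly, so I expect no loss there.
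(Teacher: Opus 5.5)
Your argument is correct, but it takes a different route from the paper's.

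\textbf{The paper's route.} The paper first handles $|S^*|\le K$ separately, using the capacitated mixture-of-MNL FPTAS of D\'esir et al.\ as a black box. It then guesses thresholds only for the active regime and restricts to $M\subseteq S\subseteq A$; this restriction is why $S^*$ must be taken to be a minimum-cardinality optimum. Next, it shifts every numerator by $\ell=\max_i\Rev^{\mathrm D}(\{i\})$ so that the Mittal--Schulz sum-of-ratios theorem applies. Finally, it only verifies that theorem's pseudopolynomial ``exact'' oracle, via an $(m+2)$-dimensional knapsack and the profit-threshold argument of Lemma~\ref{lem:det-fptas-knapsack}.

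\textbf{Your route.} You open the black box instead. You guess the $m$ denominators yourself on a $(1+\epsilon)$-geometric grid and linearize the objective. You then solve the linearized problem by a DP that tracks the exact-$K$ and budget counts exactly. The $m$ real-weight denominator constraints are tracked only after rounding $v_{it}$ down to multiples of $\epsilon d_t/n$, at a cost of $(1+2\epsilon)$ slack. The rounding never touches the item counts, so the returned set is genuinely consistent and its true revenue equals the ratio formula.

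\textbf{What your route buys.}
\begin{itemize}
\item The $\bot$ threshold folds the $|S|<K$ case into the same scheme, so no external FPTAS is needed.
\item No objective shift is needed, because only the nonnegative numerators enter the linear objective.
\item No minimum-cardinality assumption on $S^*$ is needed, because you never impose $S\subseteq A$. Products outside $\bigcup_t A_t$ merely consume budget.
\end{itemize}

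\textbf{What it costs.} You must do the rounding and running-time bookkeeping by hand. The paper's reduction is shorter given the cited theorems. Both approaches end with a bound of the form $(nL/\epsilon)^{O(m)}$.

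\textbf{A small wording fix.} Your consistency definition should read: ``if $\tau_t\neq\bot$, then $\tau_t\in S$ and $|S\cap A_t|=K$; if $\tau_t=\bot$, then $|S|<K$.'' You should also note that mixed guesses (some $\bot$, some not) admit no consistent assortment and can be discarded.
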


\paragraph{Preliminaries.}
Among all optimal assortments for Problem~\ref{prob:d-ai-ao},
let \(S^*\) have minimum cardinality.
We say that mediation is \emph{inactive} for an assortment \(S\) if \(|S|\le K\), in which case every segment views the entire assortment, and \emph{active} if \(|S|>K\), in which case each segment views only its \(K\) highest-ranked products.
Since our algorithm does not know which regime contains
\(S^*\), we propose a separate procedure for each case and return the candidate assortment with the largest value of expected revenue. We first handle the easy case of inactive mediation.

For every assortment \(S\) with \(|S|\le\min\{B,K\}\), each segment views $S$ in its entirety, since \(C_t^{\mathrm D}(S)=S\) for all segments \(t\in\mathcal T\). Hence, \(\Rev^{\mathrm D}(S)=\sum_{t\in\mathcal T}\lambda_t R_t(S)\), and optimization over these assortments is precisely the cardinality-constrained assortment optimization problem under a mixture of multinomial logit models, with capacity \(\min\{B,K\}\).
For fixed \(m\), the FPTAS of \citet{desir2022capacitated} therefore
returns a feasible assortment \(\bar S\) satisfying
\[
\Rev^{\mathrm D}(\bar S)
\ge
(1-\epsilon)\cdot
\max_{\substack{S\subseteq N\colon\\ |S|\le\min\{B,K\}}}
\Rev^{\mathrm D}(S).
\]
The running time of the algorithm therein is
\begin{equation}\label{eq:time_inactive}
\left(
\frac{n\bigl(L+\log(1/\epsilon)\bigr)}{\epsilon}
\right)^{O(m)},
\end{equation}
where \(L\) denotes the binary encoding length of the instance.
If an optimal assortment \(S^*\) for
Problem~\ref{prob:d-ai-ao} satisfies \(|S^*|\le K\), it belongs
to this restricted feasible family, so
\(\Rev^{\mathrm D}(\bar S)
\ge(1-\epsilon)\Rev^{\mathrm D}(S^*)\).
In the remainder of this section, we focus on the active regime, when $|S^*|>K$.

It remains to construct a candidate with the same guarantee when \(|S^*|>K\), which we address next. In order to do so we proceed in four steps. In Section~\ref{sec:det-fptas-thresholds}, we introduce an alternative formulation for \ref{prob:d-ai-ao} by guessing the lowest ranked product in each segment's displayed set under the optimal assortment $S^*$. In Section~\ref{sec:det-fptas-framework}, we introduce an approximation framework due to \citet{mittal2013general} for optimizing a sum of a fixed number of ratios of affine functions. In Section~\ref{sec:det-fptas-exact}, we use a multidimensional knapsack reformulation to show that the conditions of the framework of Section~\ref{sec:det-fptas-framework} are satisfied. Finally, in Section~\ref{sec:det-fptas-completion}, we combine these ingredients to obtain an FPTAS for \ref{prob:d-ai-ao} when \(m\) is fixed.

\subsubsection{A threshold reformulation}
\label{sec:det-fptas-thresholds}
Suppose now that \(|S^*|>K\). Our first step is to guess,
through exhaustive enumeration, the lowest-ranked product in each segment's displayed set under \(S^*\). Fixing an arbitrary vector \(\boldtau=(\tau_t)_{t\in\mathcal T}\in N^m\), we interpret
each \(\tau_t\) as a candidate for the lowest-ranked product in \(C_t^{\mathrm D}(S^*)\). We call \(\tau_t\) the
\emph{display threshold} for segment \(t \in \cal T\). There are \(n^m\)
candidate vectors, so let's assume that $\boldtau^*$, the vector of display thresholds corresponding to the optimal assortment $S^*$, is known as we proceed.

For any product $i\in N$ and customer segment $t\in {\cal T}$, let \(\operatorname{rank}_t(i)\) denote the position of product \(i\) in the strict ranking \(\sigma_t\), with smaller positions indicating higher ranks. Let
\(
h_{it}
=
\mathds{1}\!\left\{
\operatorname{rank}_t(i)\le\operatorname{rank}_t(\tau_t^*)
\right\}
\), which indicates whether product \(i\) ranks at least as high as the optimal display threshold $\tau_t^*$. To realize these thresholds, we require every threshold product
to be selected and exactly \(K\) selected products to satisfy
\(h_{it}=1\) for each segment \(t \in \cal T\). Let  \(M=\{\tau_t^*:t\in\mathcal T\}\) be the
set of distinct threshold products and let
\(
A
=
\{
i\in N:\sum_{t\in\mathcal T}h_{it}>0
\}
\) be set of products that rank at least as high as some threshold.
The set \(M\) records the products whose inclusion
is required, counting a product only once even if it serves as the
threshold for several segments. Products outside
\(A\) lie below every proposed threshold.
If these thresholds are realized, such products are not displayed to any segment and can be omitted without affecting revenue. We therefore
restrict attention to assortments satisfying
\(M\subseteq S\subseteq A\).
Finally, we define \(a_{it}=r_i v_{it}h_{it}\) and
\(b_{it}=v_{it}h_{it}\), and we introduce the following optimization problem:

\begin{equation}
\begin{aligned}
&\max_{S\subseteq A}\quad&
    &\Rev^{\boldtau^*}(S)
    =
    \sum_{t\in\mathcal T}\lambda_t \cdot
    \frac{\sum_{i\in S} a_{it}}
         {1+\sum_{i\in S} b_{it}}
\\
&\text{s.t}\quad&
    &M\subseteq S
    && \qquad \text{(i)}
\\
&&
    &|S|\le B
    && \qquad \text{(ii)}
\\
&&
    &\sum_{i\in S}h_{it}=K
    \qquad \text{for all }t\in\mathcal T
    && \qquad \text{(iii)}
\end{aligned}
\tag{$\text{{\sf AIM}}_\text{\sf D}^{\boldtau^*}$}
\label{eq:det-fptas-conditional}
\end{equation}

For ease of notation, let \(\mathcal X_{\boldtau^*}\) denote the feasible family
of Problem~\eqref{eq:det-fptas-conditional}, consisting of the assortments
\(S\subseteq A\) satisfying constraints~(i)--(iii).
The next lemma links this conditional problem to
Problem~\ref{prob:d-ai-ao}. It shows that their objective functions coincide
over \(\mathcal X_{\boldtau^*}\), and that the optimal assortment \(S^*\)
is feasible for the conditional problem. Its proof is provided in
Appendix~\ref{app:det-fptas-thresholds}.

\begin{lemma}
\label{lem:det-fptas-thresholds}
For every \(S\in\mathcal X_{\boldtau^*}\), we have \(
\Rev^{\boldtau^*}(S)=\Rev^{\mathrm D}(S).
\)
Moreover, \(
S^*\in\mathcal X_{\boldtau^*}.
\)
\end{lemma}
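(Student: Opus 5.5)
The plan is to prove the two claims separately. Both follow from one structural fact: for any $S\in\mathcal X_{\boldtau^*}$ and any segment $t$, we will show $C_t^{\mathrm D}(S)=\{i\in S: h_{it}=1\}$.

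\emph{Identifying the displayed set.} Fix $S\in\mathcal X_{\boldtau^*}$ and $t\in\mathcal T$. Constraint (iii) gives $|\{i\in S:h_{it}=1\}|=K$, so $|S|\ge K$ and $C_t^{\mathrm D}(S)$ consists of the $K$ products of $S$ ranked highest under $\sigma_t$. Since $h_{it}=1$ exactly when $\operatorname{rank}_t(i)\le\operatorname{rank}_t(\tau_t^*)$, the set $\{i\in S:h_{it}=1\}$ is an upper set of $S$ with respect to $\sigma_t$. Any product of $S$ outside it is ranked strictly below $\tau_t^*$. Constraint (i) puts $\tau_t^*$ in $S$, and $h_{\tau_t^*t}=1$, so $\tau_t^*$ itself lies in this upper set. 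Hence the $K$ top-ranked products of $S$ are exactly those with $h_{it}=1$, and $\tau_t^*$ is the lowest-ranked among them. It then follows that
\[
\frac{\sum_{i\in S}a_{it}}{1+\sum_{i\in S}b_{it}}=\frac{\sum_{i\in C_t^{\mathrm D}(S)}r_iv_{it}}{1+\sum_{i\in C_t^{\mathrm D}(S)}v_{it}}=R_t(C_t^{\mathrm D}(S)).
\]
Weighting by $\lambda_t$ and summing over $t$ yields $\Rev^{\boldtau^*}(S)=\Rev^{\mathrm D}(S)$.

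\emph{Feasibility of $S^*$.} By assumption $|S^*|>K$, so each $C_t^{\mathrm D}(S^*)$ has exactly $K$ elements. By definition $\tau_t^*$ is its lowest-ranked element, so $\tau_t^*\in S^*$ for every $t$, which gives (i). Constraint (ii) holds because $S^*\in\mathcal F$. For (iii), $C_t^{\mathrm D}(S^*)$ is the set of top-$K$ products of $S^*$, so the products of $S^*$ ranked at or above $\tau_t^*$ are precisely those in $C_t^{\mathrm D}(S^*)$. Thus $\sum_{i\in S^*}h_{it}=K$.

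\emph{Containment $S^*\subseteq A$.} This is the only step needing extra care, and it is where minimality of $|S^*|$ enters. Suppose some $j\in S^*\setminus A$. Then $j$ ranks strictly below every $\tau_t^*$, so $j\notin C_t^{\mathrm D}(S^*)$ for all $t$. As in the proof of Lemma~\ref{lem:det-support-reduction}, removing $j$ leaves every displayed set unchanged. Hence $S^*\setminus\{j\}$ is feasible and optimal with smaller cardinality, contradicting the choice of $S^*$. Therefore $S^*\subseteq A$, and we conclude $S^*\in\mathcal X_{\boldtau^*}$.
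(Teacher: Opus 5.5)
Your proof is correct and follows the paper's argument essentially step for step. You identify $C_t^{\mathrm D}(S)$ as $\{i\in S:h_{it}=1\}$ via constraint (iii) and the prefix structure of $h_{\cdot t}$, verify (i)--(iii) for $S^*$ using $|S^*|>K$, and obtain $S^*\subseteq A$ from the minimum-cardinality choice of $S^*$; your explicit remark that (iii) forces $|S|\ge K$ is a detail the paper leaves implicit.
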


Lemma~\ref{lem:det-fptas-thresholds} implies that any
\((1-\epsilon)\)-approximate solution to
Problem~\ref{eq:det-fptas-conditional} is also a
\((1-\epsilon)\)-approximate solution to Problem~\ref{prob:d-ai-ao}.
Indeed, let \(\widehat S\in\mathcal X_{\boldtau^*}\) be such a solution.
Then,
\begin{align}
\Rev^{\mathrm D}(\widehat S)
&=
\Rev^{\boldtau^*}(\widehat S)
\label{eq:det-fptas-threshold-transfer-output}\\
&\ge
(1-\epsilon)
\max_{S\in\mathcal X_{\boldtau^*}}
\Rev^{\boldtau^*}(S)
\label{eq:det-fptas-threshold-transfer-approx}\\
&\ge
(1-\epsilon)\Rev^{\boldtau^*}(S^*)
\label{eq:det-fptas-threshold-transfer-feasibility}\\
&=
(1-\epsilon)\Rev^{\mathrm D}(S^*).
\label{eq:det-fptas-threshold-transfer-optimum}
\end{align}
Equation~\eqref{eq:det-fptas-threshold-transfer-output} follows from
Lemma~\ref{lem:det-fptas-thresholds}.
Equation~\eqref{eq:det-fptas-threshold-transfer-approx} follows from the
approximation guarantee for \(\widehat S\).
Equation~\eqref{eq:det-fptas-threshold-transfer-feasibility} follows from
\(S^*\in\mathcal X_{\boldtau^*}\).
Finally, Equation~\eqref{eq:det-fptas-threshold-transfer-optimum} follows
again from Lemma~\ref{lem:det-fptas-thresholds}.

It therefore remains to construct a \((1-\epsilon)\)-approximate solution
to Problem~\ref{eq:det-fptas-conditional}, which we address in the
remainder of this section.

\subsubsection{A framework for sums of ratios}
\label{sec:det-fptas-framework}

We next introduce the approximation framework of
\citet{mittal2013general} for optimizing a sum of a fixed number
of ratios of affine functions. We proceed by introducing the general setup.

Let \(\mathcal X\subseteq\{0,1\}^d\) be a nonempty
feasible family. For each \(t\in\{1,\ldots,m\}\), let
\(\alpha_{t0},\beta_{t0}\in\mathbb R_{\ge0}\) and
\(\alpha_t,\beta_t\in\mathbb R_{\ge0}^d\).
We consider the optimization problem
\begin{equation}
\label{eq:det-fptas-sum-of-ratios}
\max_{x\in\mathcal X}\; G(x)
=
\sum_{t=1}^m
\frac{\alpha_{t0}+\alpha_t^\top x}
     {\beta_{t0}+\beta_t^\top x}.
\end{equation}
We assume that every numerator and denominator is strictly
positive on \(\mathcal X\), and let \(L\) denote the total
binary encoding length of the feasible-family description
and the objective coefficients.
Next, we define the {\em exact problem} associated with the family $\mathcal X$ as: given
\(c\in\mathbb Z_{\ge0}^d\) and \(C\in\mathbb Z_{\ge0}\),
determine whether there exists \(x\in\mathcal X\) satisfying
\(
\sum_{i=1}^d c_i x_i=C.
\)
Let \(T_{\mathrm{ex}}(L,U)\) be an upper bound on the running time of an algorithm that solves the exact problem, where \(U\) is an upper bound on the absolute values of the entries of \(c\) and \(C\).

The next theorem, due to \citet{mittal2013general}, shows that if the exact problem is solvable in pseudopolynomial time, then the sum-of-ratios problem~\eqref{eq:det-fptas-sum-of-ratios} admits a fully polynomial-time approximation scheme.

\begin{theorem}%
\label{thm:det-fptas-framework}
For every \(\epsilon\in(0,1)\), a feasible solution \(\widehat x\in\mathcal X\) satisfying
\(
G(\widehat x)
\ge
(1-\epsilon)\max_{x\in\mathcal X}G(x)
\)
can be computed in time\[
\left(\frac{(d+1)L}{\epsilon}\right)^{O(m)}
T_{\mathrm{ex}}\!\left(
L,\left(\frac{d+1}{\epsilon}\right)^{O(m)}
\right).
\]
\end{theorem}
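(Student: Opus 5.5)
This statement is the general result of \citet{mittal2013general}, and the plan below reconstructs its proof. The key observation is that \(G(x)\) depends on \(x\) only through the \(2m\)-dimensional vector \(y(x)=(\alpha_1^\top x,\beta_1^\top x,\ldots,\alpha_m^\top x,\beta_m^\top x)\). Moreover, \(G\) is nondecreasing in each numerator coordinate and nonincreasing in each denominator coordinate. So it suffices to find, for the optimal \(x^*\), a feasible \(\widehat x\) whose numerators are at least \((1-\epsilon')\) times those of \(x^*\) and whose denominators are at most \((1+\epsilon')\) times those of \(x^*\), with \(\epsilon'=\Theta(\epsilon)\). Each ratio is then within a factor \((1-\epsilon')/(1+\epsilon')\ge 1-2\epsilon'\), and the same bound carries over to the sum. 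The plan is to search a polynomial-size grid of rounded value vectors and to test each grid point for realizability using the exact-problem oracle.

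\textbf{Step 1: guessing the scale and rounding.} For each of the \(2m\) affine coordinates \(k\), I will guess the index \(j_k\) of the largest-coefficient variable set to \(1\) in \(x^*\), allowing a ``none'' option. This gives \((d+1)^{2m}\) guesses. Under the guess, with \(w_k\) the coefficient of \(j_k\), the coordinate value lies in \([w_k,d w_k]\), and every variable with coefficient above \(w_k\) must be \(0\). I round each coefficient down to a multiple of \(\epsilon w_k/d\), obtaining integers in \(\{0,\ldots,\lceil d/\epsilon\rceil\}\). The rounded coordinate then lies in \(\{0,\ldots,d^2/\epsilon\}\) and differs from the true value by at most \(\epsilon w_k\le\epsilon\cdot(\text{value})\). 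The constants \(\alpha_{t0},\beta_{t0}\ge0\) only help, because they shrink relative errors.

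\textbf{Step 2: collapsing \(2m\) equations into one exact query.} For a target vector \((C_1,\ldots,C_{2m})\) of rounded values, I encode the \(2m\) equalities as one equation. I do this in base \(D=d^3/\epsilon+1\): set \(c_i=\sum_k \tilde c_{ik}D^{k-1}\) and \(C=\sum_k C_k D^{k-1}\). Since no digit can overflow, equality of the combined sums is equivalent to equality of every coordinate. The exclusions of Step 1 (variables forced to \(0\)) and the inclusion of \(j_k\) (forced to \(1\)) must also be enforced. I will add one extra digit for each: a forbidden variable gets a weight in that digit that cannot be matched by the target, and a forced variable gets a weight that the target requires. All entries therefore stay bounded by \(U=((d+1)/\epsilon)^{O(m)}\).

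\textbf{Step 3: witnesses and enumeration.} I will enumerate all \((d^2/\epsilon+1)^{2m}\) target vectors and query the oracle for each. For each ``yes'' answer, I will recover a witness by self-reduction: fix the variables one at a time, using the same forced-\(0\)/forced-\(1\) digit trick, which costs \(d\) further queries. Finally I evaluate \(G\) on every witness and return the best one. The grid point of \(x^*\) under the correct guess is realizable. Any witness for that grid point has the same rounded coordinates, hence true coordinates within a \((1\pm\epsilon)\) factor, which yields the guarantee. The total count is \(((d+1)/\epsilon)^{O(m)}\) guesses and targets, times \(d\) queries, each costing \(T_{\mathrm{ex}}(L,U)\). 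Together with \(\mathrm{poly}(L)\) arithmetic, this matches the claimed bound.

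The main obstacle is Step 2. The oracle only answers ``does some \(x\in\mathcal X\) hit this single weighted sum''. Forcing and forbidding variables, and extracting witnesses, must therefore all be simulated through that one equation, without blowing \(U\) beyond \(((d+1)/\epsilon)^{O(m)}\). I would first try the extra-digit encoding described above and check carefully that no carries can occur.
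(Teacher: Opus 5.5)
Your argument is correct, but it follows a different route from the paper. The paper gives no argument of its own: it invokes Theorem~7.5 and Section~7 of Mittal and Schulz and reads off the running time by tracking their construction. As far as I recall that construction, it builds an approximate Pareto set for the $2m$ linear functions by enumerating geometric scales over their numeric range, which would explain the factor $L$ inside $\bigl((d+1)L/\epsilon\bigr)^{O(m)}$. Each scale is then handled by rounding plus pseudopolynomial exact queries.

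You replace the scale enumeration by guessing, for each coordinate, the heaviest variable selected by $x^*$. Since coefficients are nonnegative and $x\in\{0,1\}^d$, this pins $y_k(x^*)$ in $[w_k,dw_k]$. So $(d+1)^{2m}$ guesses suffice, and the number of oracle calls is $\bigl((d+1)/\epsilon\bigr)^{O(m)}$ independently of $L$, with $L$ entering only through arithmetic. That is slightly stronger than the stated bound.

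The citation buys generality, since Mittal and Schulz handle general low-rank objectives under monotonicity and scaling conditions, and it buys brevity. Your route is self-contained and makes explicit two points the paper leaves implicit:
\begin{itemize}
\item the $2m$-equation exact problem reduces to the single-equation oracle as the paper defines it, with $U=\bigl((d+1)/\epsilon\bigr)^{O(m)}$;
\item a witness, not just a yes/no answer, can be recovered with $d$ extra queries.
\end{itemize}

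The Step~2 obstacle you flag resolves cleanly. Use one aggregated digit for all forbidden variables (weight $1$, target $0$) and one for all forced variables (weight $1$, target equal to their number). Each digit sum is at most $d<D$, so there are no carries, and the self-reduction never adds digits.

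Two smaller points:
\begin{itemize}
\item Forcing $j_k$ is unnecessary. The bound $|y_k(\widehat x)-y_k(x^*)|<\epsilon w_k\le\epsilon\, y_k(x^*)$ only uses that $x^*$ contains $j_k$, not that the witness does.
\item The case $w_k=0$ should be folded into your ``none'' option, by forbidding every variable with positive coefficient in coordinate $k$. Otherwise the rounding granularity $\epsilon w_k/d$ is zero and the rounding is undefined.
\end{itemize}
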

The above theorem follows from Theorem~7.5 of \citet{mittal2013general}, which establishes the result for a single ratio. The subsequent discussions in Section~7 therein extend the result to the sum of a fixed number of ratios.
The displayed running-time bound is obtained by tracking
their construction.

\subsubsection{Verifying the exact-solvability condition}
\label{sec:det-fptas-exact}
We show that Theorem~\ref{thm:det-fptas-framework} can be applied to obtain an FPTAS for \ref{eq:det-fptas-conditional}. First, since the numerators and denominators in the framework of Theorem~\ref{thm:det-fptas-framework} must be strictly positive for all feasible solutions, we slightly shift the objective function by adding a strictly positive term to ensure this condition is met. Then, we show that the exact problem associated with \ref{eq:det-fptas-conditional} can be solved in pseudopolynomial time by a reduction to a multidimensional knapsack problem.

\paragraph{Objective shift.}
Recall the objective function of~\ref{eq:det-fptas-conditional}:
\[
\Rev^{\boldtau^*}(S)
=
\sum_{t\in\mathcal T}\lambda_t
\frac{\sum_{i\in S}a_{it}}
     {1+\sum_{i\in S}b_{it}}.
\]
The denominators are strictly positive for all feasible
assortments, but the numerator for a segment
\(t\in\mathcal T\) is zero if \(a_{it}=0\) for every
\(i\in S\). To address this issue, we add a strictly
positive term to each numerator.
Let
\(
\ell=\max_{i\in N}\Rev^{\mathrm D}(\{i\})
\)
denote the maximum expected revenue of a singleton
assortment. Since every singleton assortment is feasible
for the original problem, we have
\(\ell\le\Rev^{\mathrm D}(S^*)\).
If \(\ell=0\), every feasible assortment has zero revenue,
so we may assume that \(\ell>0\).
For every \(S\in\mathcal X_{\boldtau^*}\), we introduce
the shifted revenue function
\[
\widehat{\Rev}^{\boldtau^*}(S)
=
\sum_{t\in\mathcal T}\lambda_t
\frac{\sum_{i\in S}a_{it}
      +\ell\left(1+\sum_{i\in S}b_{it}\right)}
     {1+\sum_{i\in S}b_{it}}
=
\Rev^{\boldtau^*}(S)+\ell,
\]
where the second equality follows from
\(\sum_{t\in\mathcal T}\lambda_t=1\).
Each numerator is now at least \(\ell>0\).
We also introduce the shifted problem
\begin{equation}
\label{eq:det-fptas-shifted}
\tag{$\widehat{\mathrm{AIM}}_{\boldtau^*}$}\max_{S\in\mathcal X_{\boldtau^*}}
\widehat{\Rev}^{\boldtau^*}(S).
\end{equation}
Let \(\widehat S^*\) be an optimal solution
to~\ref{eq:det-fptas-shifted}.
The following lemma shows that an approximation guarantee
for the shifted objective yields one for the original
objective.

\begin{lemma}
\label{lem:det-fptas-shift}
For every \(\epsilon\in(0,1)\) and
\(\widehat S\in\mathcal X_{\boldtau^*}\), if
\(
\widehat{\Rev}^{\boldtau^*}(\widehat S)
\ge
(1-\epsilon/2)
\widehat{\Rev}^{\boldtau^*}(\widehat S^*),
\)
then
\(
\Rev^{\mathrm D}(\widehat S)
\ge
(1-\epsilon)\Rev^{\mathrm D}(S^*).
\)
\end{lemma}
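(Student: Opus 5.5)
The argument works directly with the identity $\widehat{\Rev}^{\boldtau^*}(S)=\Rev^{\boldtau^*}(S)+\ell$ on $\mathcal X_{\boldtau^*}$, and chains it with Lemma~\ref{lem:det-fptas-thresholds} and the bound $\ell\le\Rev^{\mathrm D}(S^*)$. There are three steps.

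First, I would compare the shifted optimum with the true optimum. Since $\widehat S^*$ maximizes $\widehat{\Rev}^{\boldtau^*}$ over $\mathcal X_{\boldtau^*}$, and $S^*\in\mathcal X_{\boldtau^*}$ by Lemma~\ref{lem:det-fptas-thresholds}, we get
\[
\widehat{\Rev}^{\boldtau^*}(\widehat S^*)\ge\widehat{\Rev}^{\boldtau^*}(S^*)=\Rev^{\boldtau^*}(S^*)+\ell=\Rev^{\mathrm D}(S^*)+\ell.
\]
The last equality again uses Lemma~\ref{lem:det-fptas-thresholds}.

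Second, I would unshift the guarantee for $\widehat S$. Write $\mathrm{OPT}=\Rev^{\mathrm D}(S^*)$. Since $\widehat S\in\mathcal X_{\boldtau^*}$, Lemma~\ref{lem:det-fptas-thresholds} gives $\Rev^{\mathrm D}(\widehat S)=\Rev^{\boldtau^*}(\widehat S)=\widehat{\Rev}^{\boldtau^*}(\widehat S)-\ell$. Combining the hypothesis with the first step,
\[
\Rev^{\mathrm D}(\widehat S)\ge(1-\epsilon/2)(\mathrm{OPT}+\ell)-\ell=(1-\epsilon/2)\mathrm{OPT}-(\epsilon/2)\ell.
\]

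Third, I would absorb the additive loss. Every singleton is feasible, so $\ell\le\mathrm{OPT}$, and hence $(\epsilon/2)\ell\le(\epsilon/2)\mathrm{OPT}$. This gives $\Rev^{\mathrm D}(\widehat S)\ge(1-\epsilon)\mathrm{OPT}$, as claimed.

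The only delicate point is the direction of the additive loss. The shift inflates the benchmark by $\ell$, and the $(1-\epsilon/2)$ factor therefore costs an extra $(\epsilon/2)\ell$. This is exactly why the lemma asks for $\epsilon/2$ rather than $\epsilon$, and why the bound $\ell\le\mathrm{OPT}$ from singleton feasibility is needed. With those two facts in place the argument is a short chain of inequalities.
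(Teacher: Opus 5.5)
Your proposal is correct and follows the paper's proof essentially step for step. Both arguments chain $\Rev^{\mathrm D}(\widehat S)=\widehat{\Rev}^{\boldtau^*}(\widehat S)-\ell$, the hypothesis, optimality of $\widehat S^*$ over $\mathcal X_{\boldtau^*}\ni S^*$, and Lemma~\ref{lem:det-fptas-thresholds}, and then absorb the additive $(\epsilon/2)\ell$ term using $\ell\le\Rev^{\mathrm D}(S^*)$.
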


\begin{proof}
By Lemma~\ref{lem:det-fptas-thresholds},
\(S^*\in\mathcal X_{\boldtau^*}\), and
\(\Rev^{\boldtau^*}(S)=\Rev^{\mathrm D}(S)\)
for every \(S\in\mathcal X_{\boldtau^*}\).
Consequently,
\begin{align*}
\Rev^{\mathrm D}(\widehat S)
&=
\widehat{\Rev}^{\boldtau^*}(\widehat S)-\ell\\
&\ge
(1-\epsilon/2)
\widehat{\Rev}^{\boldtau^*}(\widehat S^*)-\ell\\
&\ge
(1-\epsilon/2)
\widehat{\Rev}^{\boldtau^*}(S^*)-\ell\\
&=
(1-\epsilon/2)\Rev^{\mathrm D}(S^*)
-\frac{\epsilon}{2}\ell\\
&\ge
(1-\epsilon)\Rev^{\mathrm D}(S^*),
\end{align*}
where the last inequality uses
\(\ell\le\Rev^{\mathrm D}(S^*)\).
\end{proof}
Consequently, a \((1-\epsilon/2)\)-approximate solution
for the shifted problem yields a
\((1-\epsilon)\)-approximate solution for the original
problem.

\paragraph{Algorithm for the exact problem.}
In light of Lemma~\ref{lem:det-fptas-shift}, we focus on
approximating the shifted problem~\ref{eq:det-fptas-shifted}.
Identifying each assortment \(S\subseteq A\) with its
incidence vector \(x\in\{0,1\}^{A}\), where \(x_i=1\)
if and only if \(i\in S\) for every \(i\in A\), we
can rewrite~\ref{eq:det-fptas-shifted} as the following
integer program:
\begin{equation}
\label{eq:det-fptas-shifted-vector}
\begin{aligned}
\max\quad
& \sum_{t\in\mathcal T}\lambda_t
  \frac{\ell+\sum_{i\in A}(a_{it}+\ell b_{it})x_i}
       {1+\sum_{i\in A}b_{it}x_i}\\
\text{s.t.}\quad
& x_i=1,
&& i\in M,\\
& \sum_{i\in A}x_i\le B,\\
& \sum_{i\in A}h_{it}x_i=K,
&& t\in\mathcal T,\\
& x_i\in\{0,1\},
&& i\in A.
\end{aligned}
\end{equation}
Let \(\mathcal X\subseteq\{0,1\}^{A}\) denote the
feasible set of~\eqref{eq:det-fptas-shifted-vector}.
We now show that this problem falls within the framework
of Theorem~\ref{thm:det-fptas-framework}.
Each numerator is at least \(\ell>0\), and each
denominator is at least one. Moreover, both are affine
functions of \(x\) with nonnegative coefficients.
It remains to provide a pseudopolynomial-time algorithm
for the associated exact problem.

\noindent The exact problem associated with
\eqref{eq:det-fptas-shifted-vector} is to determine, for a given
vector \(c\in\mathbb Z_{\ge0}^{A}\) and target
\(C\in\mathbb Z_{\ge0}\), whether there exists
\(x\in\mathcal X\) satisfying \(c^\top x=C\). We show that this problem can be solved in pseudopolynomial time by a reduction to a multidimensional knapsack problem.
First notice that any feasible solution \(x\in\mathcal X\) must satisfy \(x_i=1\) for all \(i\in M\), and have at most $B$ entries equal to one. Letting $I = A\setminus M$, the exact problem is equivalent to: determine whether there exists a  vector \(x\in\{0,1\}^{I}\) satisfying
\[
\sum_{i\in I}x_i\le B-|M|,
\qquad
\sum_{i\in I}c_i x_i=C-\sum_{i\in M}c_i,
\]
and, for every \(t\in\mathcal T\),
\[
\sum_{i\in I}h_{it}x_i=K-\sum_{i\in M}h_{it}.
\]

If any right-hand side above is negative, the exact
problem is infeasible. Otherwise, we construct the
following \((m+2)\)-dimensional \(0\)--\(1\) knapsack
problem:
\begin{equation}
\begin{aligned}
\max_{x\in\{0,1\}^{I}}\quad
&\sum_{i\in I}(c_i + \sum_{t\in\mathcal T}h_{it})\cdot x_i\\
\text{s.t.}\quad
&\sum_{i\in I}x_i\le B-|M|,\\
&\sum_{i\in I}h_{it}x_i\le K - \sum_{i\in M}h_{it}
\qquad\text{for all }t\in\mathcal T,\\
&\sum_{i\in I}c_i x_i\le C - \sum_{i\in M}c_i.
\end{aligned}
\label{eq:det-fptas-knapsack}
\end{equation}

\noindent The next lemma links problem~\eqref{eq:det-fptas-knapsack} to the exact problem.
\begin{lemma}\label{lem:det-fptas-knapsack}
    The  exact problem is feasible if and only if \eqref{eq:det-fptas-knapsack} admits a feasible solution with profit at least \[C-\sum_{i\in M}c_i+ mK - \sum_{t\in\mathcal T}\sum_{i\in M} h_{it}.\]
\end{lemma}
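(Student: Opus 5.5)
The argument is a standard "equalities via capacity-saturating profit" reduction. Write $C' = C-\sum_{i\in M}c_i$ and $K_t' = K-\sum_{i\in M}h_{it}$ for each $t\in\mathcal T$. Note that $\sum_{t}K_t' = mK-\sum_{t}\sum_{i\in M}h_{it}$, so the target profit equals $C'+\sum_t K_t'$. We are in the case where all right-hand sides are nonnegative; otherwise the exact problem was already declared infeasible. The key observation is that the profit of any $x\in\{0,1\}^I$ splits as $\sum_{i\in I}c_ix_i+\sum_{t}\sum_{i\in I}h_{it}x_i$. Each of the $m+1$ summands is bounded above by its knapsack capacity, and these are exactly $C'$ and the $K_t'$.

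For the forward direction, take $x\in\mathcal X$ solving the exact problem and restrict it to $I=A\setminus M$. By the equivalent formulation derived just before \eqref{eq:det-fptas-knapsack}, this restriction satisfies $\sum_{i\in I}x_i\le B-|M|$, $\sum_{i\in I}c_ix_i=C'$, and $\sum_{i\in I}h_{it}x_i=K_t'$ for every $t$. It therefore satisfies all knapsack constraints, with equality in the last $m+1$. Its profit is exactly $C'+\sum_tK_t'$, which meets the threshold.

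For the reverse direction, take a knapsack-feasible $x\in\{0,1\}^I$ with profit at least $C'+\sum_tK_t'$. Feasibility gives $\sum_i c_ix_i\le C'$ and $\sum_i h_{it}x_i\le K_t'$ for each $t$. Summing these, the profit is at most $C'+\sum_tK_t'$, so it equals this value. Since the sum of $m+1$ nonnegative slacks is then zero, every slack vanishes. Hence $\sum_{i\in I}c_ix_i=C'$ and $\sum_{i\in I}h_{it}x_i=K_t'$ for all $t$, and the cardinality bound $\sum_{i\in I} x_i\le B-|M|$ holds by knapsack feasibility. Extending $x$ by $x_i=1$ for $i\in M$ gives a vector in $\{0,1\}^A$. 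It satisfies $x_i=1$ on $M$, $\sum_{i\in A}x_i\le B$, $\sum_{i\in A}h_{it}x_i=K$, and $c^\top x=C$. This is a feasible solution of the exact problem.

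There is essentially no obstacle. The only point requiring care is the bookkeeping: the sets $M$ and $I$ partition $A$, so the contributions of $M$ are counted exactly once. Also, $M\subseteq A$ holds because each $\tau_t^*$ satisfies $h_{\tau_t^*t}=1$. This gives the claimed equivalence.
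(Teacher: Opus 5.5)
Your proof is correct and follows essentially the same argument as the paper. The forward direction restricts an exact solution to $I$, giving profit exactly equal to the threshold; the reverse direction notes that the $m+1$ nonnegative slacks sum to at most zero, forces every equality, and then extends by $x_i=1$ on $M$. Your extra remark that $M\subseteq A$ (because $h_{\tau_t^* t}=1$) is a small bookkeeping point the paper leaves implicit.
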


\noindent The proof is given in Appendix~\ref{app:det-fptas-knapsack}.
The standard exact algorithm for multidimensional knapsack
\citep[Section~9.3.2]{kellerer2004knapsack}
solves the constructed instance and recovers a selected
subset in time
\(
O\!\left(
m(C+1)n^{O(m)}
\right).
\)

\subsubsection{Completing the proof}
\label{sec:det-fptas-completion}
\paragraph{Correctness.}
For each candidate threshold vector, we apply
Theorem~\ref{thm:det-fptas-framework} to the shifted
problem~\ref{eq:det-fptas-shifted} with accuracy
\(\epsilon/2\). For the threshold vector
\(\boldtau^*\), Lemma~\ref{lem:det-fptas-shift}
guarantees that the returned assortment has expected
revenue at least
\((1-\epsilon)\Rev^{\mathrm D}(S^*)\).
Evaluating all returned assortments, together with
\(\bar S\), under the original revenue function and
retaining the best therefore gives the desired
approximation guarantee.

\paragraph{Running time.}
There are at most \(n^m\) candidate threshold vectors.
For each vector, we solve the corresponding shifted
problem~\ref{eq:det-fptas-shifted}.
Let \(L\) denote the length of the original instance,
and let \(\widehat L=O(nmL)\) bound the lengths of the
shifted conditional instances.
The exact algorithm satisfies
\[
T_{\mathrm{ex}}(\widehat L,U)
=
O\!\left(m(U+1)\widehat L^{O(m)}\right).
\]
Therefore, substituting this bound and \(d=|A|\le n\)
into Theorem~\ref{thm:det-fptas-framework}, with
accuracy \(\epsilon/2\), gives a running time of
\(
m\left({n\widehat L}/{\epsilon}\right)^{O(m)}
\)
per threshold vector. Multiplying by the number of
threshold vectors gives
\[
mn^m
\left(\frac{n\widehat L}{\epsilon}\right)^{O(m)}
=
m\left(\frac{nmL}{\epsilon}\right)^{O(m)}.
\]
Including the inactive-mediation procedure and the
evaluation and comparison of the returned assortments,
the total running time is
\[
m\left(
\frac{nm\left(L+\log(1/\epsilon)\right)}{\epsilon}
\right)^{O(m)}.
\]
For fixed \(m\), this bound is polynomial in \(L\) and
\(1/\epsilon\), establishing the FPTAS.

\section{Conclusion}\label{sec:conclusion}

This paper studies assortment optimization when a retailer chooses the catalog available to an AI intermediary, which then determines the products shown to each customer. We estimate and validate Plackett--Luce ranking models using controlled queries of AI shopping assistants. Combining these estimates with synthetic customer preferences, we find that accounting for mediation leads to smaller assortments on average and can induce retailers to limit their catalogs so that every offered product is displayed. Across the tested capacity configurations, ignoring mediation produces mean revenue losses ranging from \(2.52\%\) to \(11.37\%\) relative to the PL optimum.
A deterministic approximation based on the ordering of fitted AI utilities recovers nearly all of the optimal PL-mediated revenue on average in our experiments, with mean losses of at most \(0.53\%\) across capacity configurations. This finding motivates our analysis of assortment optimization under deterministic mediation. We characterize its computational complexity and develop an FPTAS for any fixed number of customer segments and arbitrary display capacity. The FPTAS guarantee applies to deterministic mediation. The numerical comparisons under stochastic mediation use exactly optimized deterministic assortments.

We hope this work opens new directions for research on assortment optimization under AI mediation. As AI shopping assistants become more prevalent, understanding how they reshape product exposure, consumer choice, and retailers’ decisions will become increasingly important.

\section*{Acknowledgments}
The authors gratefully acknowledge Danny Segev for valuable discussions and thoughtful feedback on this paper.

{\small\singlespacing
\bibliographystyle{plainnat}
\bibliography{references}
}

\newpage
\appendix
\section{Additional Material for Section 3}
\label{app:estimation-computation}

\subsection{AI-Ranking Data Generation Details}
\label{app:pl-calibration}

{\emergencystretch=2em
\paragraph{Product examples.}
To illustrate the information supplied to the AI models, we showcase one representative product from each product category. Product numbers are category-specific identifiers used in the ranking task. The prompt includes the product title, listed price, brand/store, and the available category-specific attributes; fields with missing values are omitted.
\begin{itemize}
    \item \textbf{Lightning charger (product 1).} 
        \emph{Title:} {(MFi Apple Certified) Galvanox 1-PC Lightning Fast
        iPhone Wall Charger Plug for iPhone 11/12/13/14 Pro Max (20W PD
        Technology)}.
        \emph{Listed price:} \$21.49.
        \emph{Brand/store:} GALVANOX.
        \emph{Category-specific attributes:}
        \texttt{wattage=20} and
        \texttt{connector\_type=lightning}.
    \item \textbf{Robot vacuum (product 1).}
        \emph{Title:} {Robot Vacuum Cleaner,Sweeping Robot,Ultra Slim
        Quiet,1800pa Super-Strong Suction,Cleans Hard Floors to Medium-Pile
        Carpets,Integral Memory Multiple Cleaning Modes Vacuum Best for Pet Hairs}.
        \emph{Listed price:} \$73.59.
        \emph{Brand/store:} Green Hills.
        \emph{Category-specific attributes:}
        \texttt{suction\_power\_\allowbreak pa=1800},
        \texttt{run\_time\_\allowbreak min=90},
        \texttt{wifi=not\_\allowbreak detected},
        \texttt{mopping=no}, and
        \texttt{self\_\allowbreak emptying=no}.
    \item \textbf{Pet leash (product 1).} \emph{Title:} {Pet Attire\textsuperscript{\textregistered} Ribbon Nylon Leash, 5/8\char34{} x 6\char39{}}.
        \emph{Listed price:} \$9.87.
        \emph{Brand/store:} Coastal.
        \emph{Category-specific attributes:}
        \texttt{product\_type=leash},
        \texttt{species=pet}, and
        \texttt{material=nylon}.
\end{itemize}

\paragraph{Prompt template.}
For reproducibility, we report below the complete prompt template for the
value-seeking persona. Bracketed fields are populated separately for each
ranking task using the relevant product category, the ten product records in
the presented offer set, and their product numbers. The budget-sensitive and
premium-seeking prompts use the same template, with only the opening persona
instruction replaced by the corresponding persona description.

\smallskip
\noindent\emph{System message.}
\begin{quote}
\small\ttfamily\raggedright
You are assisting a value-seeking customer who cares most about the best
price-to-quality tradeoff. They will pay more for a meaningful step up in
quality, but do not want to overpay for marginal improvements or for brand
name alone.

\medskip
You are a personal shopping assistant evaluating [CATEGORY LABEL] products.

\medskip
TASK: You will receive a list of products available in a specific offer set.
Select the best 5 products from ONLY those offered items.

\medskip
RULES:

- Return valid JSON only --- a single object with a top\_5 array.

- Return exactly five entries, ordered by preference, with ranks exactly 1
through 5.

- Each element must contain only rank and product\_number.

- product\_number MUST be one of the catalog numbers listed in the offer set.
Do not return any product\_number that was not offered.

- Do not select the same product\_number more than once.

- Do not include prose, Markdown fences, or any additional field.
\end{quote}

\noindent\emph{User message.}
\begin{quote}
\small\ttfamily\raggedright
Available items:

[PRODUCT RECORDS]

\medskip
Pick the best 5 from this list (catalog numbers: [OFFERED PRODUCT NUMBERS])
\end{quote}

\paragraph{Provider request settings.} As shown above, all three providers receive the same logical prompt and are constrained to the
same JSON response structure: a single object containing a
\texttt{top\_5} array of exactly five entries, each containing only
\texttt{rank} and \texttt{product\_number}. OpenAI uses temperature \(1.0\)
and a maximum output allowance of \(4{,}096\) tokens; Anthropic uses
temperature \(1.0\) and a maximum output allowance of \(4{,}096\) tokens.
Gemini uses a \(4{,}096\)-token maximum output allowance, JSON response mode,
and minimal thinking; temperature, top-\(p\), top-\(k\), and candidate count
are left unset.
\par}

\section{Additional Material for Section~\ref{sec:numerics}}
\label{app:additional-numerics}

\subsection{Preference Alignment Across Capacity Configurations}
\label{app:alignment-full-grid}

Here we complement the preference-alignment analysis of
Section~\ref{sec:numerics-alignment} by reporting the complete set of tested
capacity configurations. Figure~\ref{fig:alignment-full-grid} displays mean
revenue loss as a function of \(\rho\), separately for direct-MNL and the
deterministic plug-in, for all seven \((B,K)\) configurations.

\begin{figure}[t]
\centering
\includegraphics[width=0.9\linewidth]
{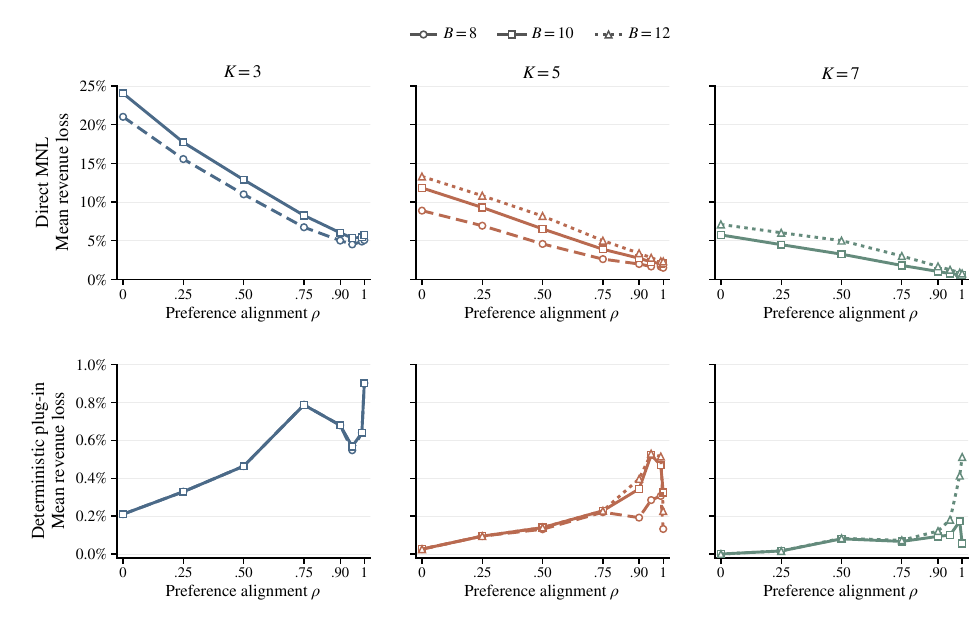}
\caption{Revenue loss by customer--AI preference alignment across the full
capacity grid.}
\label{fig:alignment-full-grid}
\end{figure}

\subsection{Preference Alignment and Filtering Mechanisms}
\label{app:alignment-filtering-mechanisms}

Here we complement the preference-alignment analysis of
Section~\ref{sec:numerics-alignment} by reporting the filtering mechanisms
across the complete set of tested capacity configurations.
Figure~\ref{fig:alignment-filtering-mechanisms-full-grid} reports the mean
difference \(\lvert S_{\mathrm{PL}}^\star\rvert-K\) and the active-filtering
share for all seven \((B,K)\) configurations.

\begin{figure}[t]
\centering
\includegraphics[width=0.9\linewidth]
{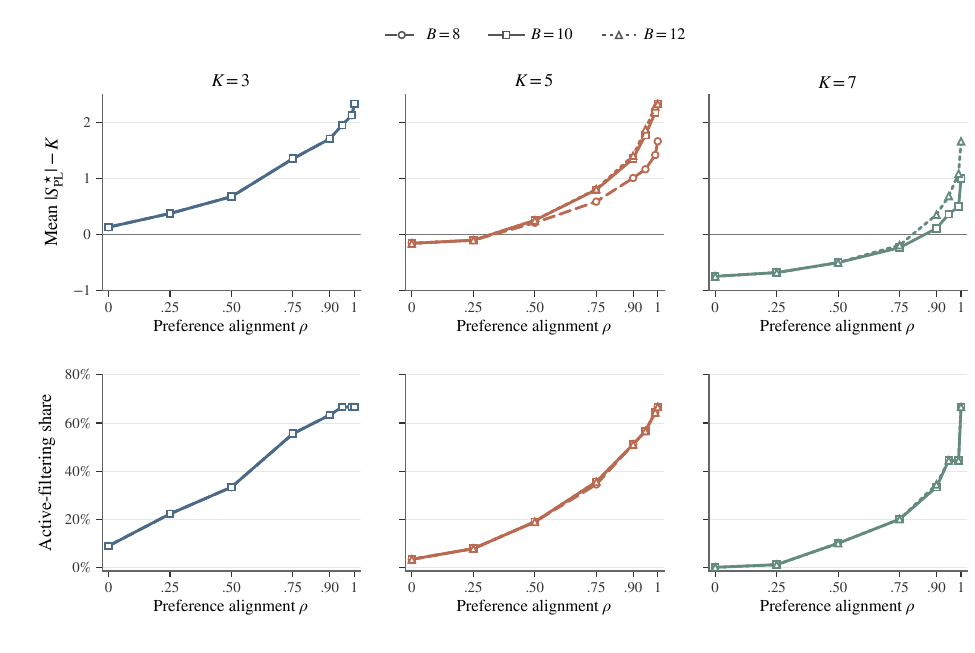}
\caption{Preference alignment and filtering mechanisms across the full
capacity grid.}
\label{fig:alignment-filtering-mechanisms-full-grid}
\end{figure}

\section{Additional Proofs for Deterministic Ranking Mediation}\label{app:deterministic-proofs}

\subsection{Proof of Theorem~\ref{thm:det-hardness}}\label{app:det-hardness}
\paragraph{Assortment planning under ranking preferences.}
We reduce from the assortment planning problem under ranking preferences studied by \citet{aouad2018approximability}. Consider a set of products \(N=\{1,\ldots,n\}\), where each product \(i\in N\) generates revenue \(\bar r_i>0\) when purchased, and a set of customer segments \(\mathcal T\). Segment \(t\in\mathcal T\) occurs with probability \(\lambda_t>0\), where \(\sum_{t\in\mathcal T}\lambda_t=1\), and is associated with an ordered preference list \(L_t=(\ell_{t,1},\ldots,\ell_{t,|L_t|})\) consisting of a subset of \(N\). The products in \(L_t\) are ordered from the most preferred to the least preferred product that segment \(t\) is willing to purchase. Given an assortment \(S\subseteq N\), if \(S\cap L_t\neq\emptyset\), let \(q_t(S)=\min\{q\in\{1,\ldots,|L_t|\}:\ell_{t,q}\in S\}\). Segment \(t\) then purchases product \(\ell_{t,q_t(S)}\), generating revenue \(\bar r_{\ell_{t,q_t(S)}}\). If \(S\cap L_t=\emptyset\), the segment makes no purchase and generates zero revenue.
Thus, for every segment \(t\in\mathcal T\), define the revenue generated by assortment \(S\subseteq N\) as
\[
R_t^{\mathrm{RP}}(S)
=
\begin{cases}
\bar r_{\ell_{t,q_t(S)}}, & S\cap L_t\neq\emptyset,\\
0, & S\cap L_t=\emptyset.
\end{cases}
\]
The expected revenue of assortment \(S\) is then
\[
\Rev^{\mathrm{RP}}(S)
=
\sum_{t\in\mathcal T}\lambda_t R_t^{\mathrm{RP}}(S),
\]
and the objective is to maximize \(\Rev^{\mathrm{RP}}(S)\) over all \(S\subseteq N\).
    For every fixed \(\epsilon>0\), this problem is NP-hard to approximate within factor \(O(n^{1-\epsilon})\) \citep[Theorem~2.1]{aouad2018approximability}.

\paragraph{Instance construction.}
Consider an arbitrary instance of assortment planning under ranking preferences as defined above. We construct an instance of \ref{prob:d-ai-ao} with the same product set \(N\), the same set of segments \(\mathcal T\), and the same segment probabilities \((\lambda_t)_{t\in\mathcal T}\). We set \(K=1\) and \(B=n\). For every product \(i\in N\), we set its revenue in the constructed instance to \(r_i=2\bar r_i\).

For every segment \(t\in\mathcal T\), we construct the deterministic AI ranking \(\sigma_t\) by first listing the products in \(L_t\) according to their order in the preference list, and then listing the products in \(N\setminus L_t\) in increasing order of their product indices. Thus, every product in \(L_t\) is ranked above every product outside \(L_t\), while the relative ordering of the products in \(L_t\) is preserved. Finally, for every product \(i\in N\) and segment \(t\in\mathcal T\), we define the customer-side MNL weight by
\[
v_{it}
=
\begin{cases}
1, & i\in L_t,\\
0, & i\notin L_t.
\end{cases}
\]
The construction clearly takes polynomial time.

The following claim establishes that the reduction preserves the objective function exactly.

\begin{claim}
\label{cl:det-hardness-revenue-preservation}
For every assortment \(S\subseteq N\), we have
\(
\Rev^{\mathrm D}(S)=\Rev^{\mathrm{RP}}(S).
\)
\end{claim}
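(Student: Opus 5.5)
The plan is to verify the identity segment by segment: because $\Rev^{\mathrm D}$ and $\Rev^{\mathrm{RP}}$ are both $\lambda_t$-weighted sums over the same segments with the same probabilities, it suffices to prove that for every $t\in\mathcal T$ and every $S\subseteq N$,
\[
R_t\bigl(C_t^{\mathrm D}(S)\bigr)=R_t^{\mathrm{RP}}(S).
\]
Fix $t$ and $S$. With $K=1$, the displayed set $C_t^{\mathrm D}(S)$ is empty if $S=\emptyset$, and otherwise it is the singleton containing the $\sigma_t$-highest-ranked product of $S$. I would split into two cases according to whether $S\cap L_t$ is empty.

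\emph{Case $S\cap L_t\neq\emptyset$.} The ranking $\sigma_t$ places every product of $L_t$ above every product outside $L_t$, and it preserves the order within $L_t$. Hence the top-ranked product of $S$ is the first product of $L_t$ that appears in $S$, namely $\ell_{t,q_t(S)}$, and $C_t^{\mathrm D}(S)=\{\ell_{t,q_t(S)}\}$. Its customer weight is $v_{\ell_{t,q_t(S)}t}=1$, so
\[
R_t\bigl(C_t^{\mathrm D}(S)\bigr)=\frac{r_{\ell_{t,q_t(S)}}\cdot 1}{1+1}=\frac{2\bar r_{\ell_{t,q_t(S)}}}{2}=\bar r_{\ell_{t,q_t(S)}}.
\]
This equals $R_t^{\mathrm{RP}}(S)$. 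The choice $r_i=2\bar r_i$ is what cancels the MNL purchase probability $1/2$.

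\emph{Case $S\cap L_t=\emptyset$.} Either $S=\emptyset$, in which case $C_t^{\mathrm D}(S)=\emptyset$ and the revenue is $0$. Or the displayed singleton is some $j\notin L_t$, whose weight is $v_{jt}=0$, so $R_t=0/(1+0)=0$. Both subcases match $R_t^{\mathrm{RP}}(S)=0$.

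Summing with weights $\lambda_t$ gives $\Rev^{\mathrm D}(S)=\Rev^{\mathrm{RP}}(S)$. The only step needing care is identifying the top product of $S$ under $\sigma_t$ in the first case, which follows directly from how $\sigma_t$ was constructed. No step poses a real obstacle.
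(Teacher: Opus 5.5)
Your proposal is correct and follows the paper's own proof essentially step for step. Like the paper, you reduce to a per-segment identity, split on whether $S\cap L_t$ is empty, use the construction of $\sigma_t$ to identify $\ell_{t,q_t(S)}$ as the displayed product, let $r_i=2\bar r_i$ cancel the purchase probability $1/2$, and use the zero weights outside $L_t$ (plus the trivial $S=\emptyset$ subcase) to get zero revenue.
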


Before proving the claim, we show how it establishes the theorem. Since \(B=n\), every assortment \(S\subseteq N\) is feasible in the constructed instance. Claim~\ref{cl:det-hardness-revenue-preservation} therefore implies that the two optimization problems have the same objective value for every assortment, and in particular the same optimal value. Moreover, any \(\rho\)-approximate assortment for the constructed instance is a \(\rho\)-approximate assortment for the original ranking-preferences instance. Hence, for every fixed \(\epsilon>0\), a polynomial-time \(O(n^{1-\epsilon})\)-approximation for \ref{prob:d-ai-ao} would yield a polynomial-time \(O(n^{1-\epsilon})\)-approximation for assortment planning under ranking preferences, contradicting \citet[Theorem~2.1]{aouad2018approximability}. It remains to prove Claim~\ref{cl:det-hardness-revenue-preservation}.

\begin{proof}[Proof of Claim~\ref{cl:det-hardness-revenue-preservation}]
Fix an assortment \(S\subseteq N\) and a segment \(t\in\mathcal T\). Suppose first that \(S\cap L_t\neq\emptyset\), and let \(q_t(S)\) be defined as above. Since no product preceding \(\ell_{t,q_t(S)}\) in \(L_t\) belongs to \(S\), and every product outside \(L_t\) is ranked below every product in \(L_t\) under \(\sigma_t\), product \(\ell_{t,q_t(S)}\) is the highest-ranked product in \(S\) under \(\sigma_t\). Since \(K=1\), it is therefore the unique product displayed to segment \(t\). Its customer-side weight is one, and hence
\[
R_t\!\left(\{\ell_{t,q_t(S)}\}\right)
=
\frac{r_{\ell_{t,q_t(S)}}}{2}
=
\bar r_{\ell_{t,q_t(S)}}.
\]
This is exactly the revenue generated by segment \(t\) in the ranking-preferences instance.

Suppose next that \(S\cap L_t=\emptyset\). If \(S=\emptyset\), both models generate zero revenue from segment \(t\). Otherwise, the AI displays some product \(i\in S\). Since \(S\cap L_t=\emptyset\), we have \(i\notin L_t\), and therefore \(v_{it}=0\). Consequently, \(R_t(\{i\})=0\), which again coincides with the revenue generated by segment \(t\) in the ranking-preferences instance.

Thus, the segment-level revenues coincide for every segment \(t\in\mathcal T\). Multiplying by the segment probabilities and summing over \(t\in\mathcal T\) gives
\(
\Rev^{\mathrm D}(S)=\Rev^{\mathrm{RP}}(S).
\)
\end{proof}

\subsection{Additional Proofs for Section~\ref{sec:det-fptas}}
\label{app:det-fptas}

\subsubsection{Proof of Lemma~\ref{lem:det-fptas-thresholds}}\label{app:det-fptas-thresholds}
First, fix an assortment \(S\in\mathcal X_{\boldtau^*}\) and a segment
\(t\in\mathcal T\). Since \(M\subseteq S\), we have
\(\tau_t^*\in S\). Moreover, constraint~(iii) gives
\(
\sum_{i\in S}h_{it}=K.
\)
By definition, \(h_{it}=1\) if and only if product \(i\) is ranked at
least as high as \(\tau_t^*\) under \(\sigma_t\). Hence, exactly \(K\)
products in \(S\) are ranked at least as high as \(\tau_t^*\). These products are precisely the
\(K\) highest-ranked products in \(S\). Therefore,
\(
C_t^{\mathrm D}(S)
=
\{i\in S:h_{it}=1\}.
\)
It follows that
\[
R_t\!\left(C_t^{\mathrm D}(S)\right)
=
\frac{\sum_{i\in S}r_i v_{it}h_{it}}
     {1+\sum_{i\in S}v_{it}h_{it}}.
\]
Multiplying by \(\lambda_t\) and summing over
\(t\in\mathcal T\) yields
\(
\Rev^{\mathrm D}(S)
=
\Rev^{\boldtau^*}(S).
\)

We next show that \(S^*\in\mathcal X_{\boldtau^*}\).
By construction, \(\tau_t^*\in C_t^{\mathrm D}(S^*)\subseteq S^*\)
for every \(t\in\mathcal T\), and therefore \(M\subseteq S^*\).
Moreover, \(S^*\in\mathcal F\), so \(|S^*|\le B\). Since we are in the
active regime, \(|S^*|>K\), and thus
\(\lvert C_t^{\mathrm D}(S^*)\rvert=K\) for every
\(t\in\mathcal T\). Because \(\tau_t^*\) is the lowest-ranked product
in \(C_t^{\mathrm D}(S^*)\), we have
\(
C_t^{\mathrm D}(S^*)
=
\{i\in S^*:h_{it}=1\},
\)
and consequently,
\(
\sum_{i\in S^*}h_{it}=K
\)
for every $t\in\mathcal T$

It remains only to show that \(S^*\subseteq A\). Suppose, toward a
contradiction, that some product \(i\in S^*\setminus A\) exists. By
the definition of \(A\), we have \(h_{it}=0\) for every
\(t\in\mathcal T\). Thus, product \(i\) is ranked below
\(\tau_t^*\) for every segment \(t\), and hence is not contained in
\(C_t^{\mathrm D}(S^*)\) for any segment. Removing \(i\) therefore
does not alter any displayed set:
\(
C_t^{\mathrm D}(S^*\setminus\{i\})
=
C_t^{\mathrm D}(S^*)
\)
for every \(t\in\mathcal T.
\)
Consequently,
\[
\Rev^{\mathrm D}(S^*\setminus\{i\})
=
\Rev^{\mathrm D}(S^*).
\]
Since \(S^*\setminus\{i\}\in\mathcal F\), this gives an optimal
assortment with strictly smaller cardinality than \(S^*\), contradicting
the choice of \(S^*\) as a minimum-cardinality optimal assortment.
Therefore, \(S^*\subseteq A\).
We have shown that \(S^*\) satisfies constraints~(i)--(iii), and hence
\(S^*\in\mathcal X_{\boldtau^*}\).

\subsubsection{Proof of Lemma~\ref{lem:det-fptas-knapsack}}
\label{app:det-fptas-knapsack}
Let
\[
P=C-\sum_{i\in M}c_i
+mK-\sum_{t\in\mathcal T}\sum_{i\in M}h_{it}.
\]
If the exact problem is feasible, restricting a feasible
solution to \(I\) yields a feasible solution to
\eqref{eq:det-fptas-knapsack} with profit exactly \(P\).
Conversely, let \(x\in\{0,1\}^{I}\) be a feasible solution
to \eqref{eq:det-fptas-knapsack} with profit at least \(P\).
On the one hand,
\begin{equation*}
\sum_{t\in\mathcal T}
\left(K-\sum_{i\in M}h_{it}
-\sum_{i\in I}h_{it}x_i\right)
+
\left(C-\sum_{i\in M}c_i-\sum_{i\in I}c_i x_i\right)
\,\ge\,0,
\end{equation*}
since each of the \(m+1\) summands is nonnegative by
feasibility. On the other hand,
\begin{equation*}
\sum_{t\in\mathcal T}
\left(K-\sum_{i\in M}h_{it}
-\sum_{i\in I}h_{it}x_i\right)
+
\left(C-\sum_{i\in M}c_i-\sum_{i\in I}c_i x_i\right)
=
P-\sum_{i\in I}
\left(c_i+\sum_{t\in\mathcal T}h_{it}\right)x_i
\le0,
\end{equation*}
because the profit of \(x\) is at least \(P\).
Thus, this sum equals zero. Since every summand is
nonnegative, each must equal zero. It follows that
\(
\sum_{i\in I}c_i x_i=C-\sum_{i\in M}c_i,
\)
and, for every \(t\in\mathcal T\), we have
\(
\sum_{i\in I}h_{it}x_i
=K-\sum_{i\in M}h_{it}.
\)
Moreover, \(\sum_{i\in I}x_i\le B-|M|\) by feasibility.
Extending \(x\) by setting \(x_i=1\) for every \(i\in M\)
therefore yields a feasible solution to the exact problem.

\end{document}